\newtheorem{theorem}{Theorem}
\newtheorem{lemma}[theorem]{Lemma}
\newtheorem{corollary}[theorem]{Corollary}
\newtheorem{remark}[theorem]{Remark}
\crefname{figure}{Figure}{Figures}  % 单数和复数形式
\crefname{table}{Table}{Tables} 
\crefname{equation}{Eq.}{Eqs.}
\crefname{section}{Section}{Section}
\title{Generalizing Vision-Language Models with Dedicated Prompt Guidance}
\author{
    %Authors
    % All authors must be in the same font size and format.
    Xinyao Li\textsuperscript{\rm 1},
    Yinjie Min\textsuperscript{\rm 2},
    Hongbo Chen\textsuperscript{\rm 1},
    Zhekai Du\textsuperscript{\rm 1},
    Fengling Li\textsuperscript{\rm 3},
    Jingjing Li\textsuperscript{\rm 1}\thanks{Corresponding author.}
}
\title{My Publication Title --- Single Author}
\author {
    Author Name
}
\title{My Publication Title --- Multiple Authors}
\author {
    % Authors
    First Author Name\textsuperscript{\rm 1},
    Second Author Name\textsuperscript{\rm 2},
    Third Author Name\textsuperscript{\rm 1}
}
\begin{document}

\maketitle

\begin{abstract}
    Fine-tuning large pretrained vision-language models (VLMs) has emerged as a prevalent paradigm for downstream adaptation, yet it faces a critical trade-off between domain specificity and domain generalization (DG) ability. 
    Current methods typically fine-tune a universal model on the entire dataset, which potentially compromises the ability to generalize to unseen domains. To fill this gap, we provide a theoretical understanding of the generalization ability for VLM fine-tuning, which reveals that training multiple parameter-efficient expert models on partitioned source domains leads to better generalization  than fine-tuning a universal model. 
    Inspired by this finding, we propose a two-step domain-expert-\underline{Gui}ded DG (GuiDG) framework. GuiDG first employs prompt tuning to obtain source \textit{domain experts}, then introduces a Cross-Modal Attention module to guide the fine-tuning of the vision encoder via adaptive expert integration. 
    To better evaluate few-shot DG, we construct ImageNet-DG  from ImageNet and its variants. Extensive experiments on standard DG benchmarks and ImageNet-DG demonstrate that GuiDG  improves upon state-of-the-art fine-tuning methods while maintaining efficiency.  
\end{abstract} 
\begin{links}
    \link{Code}{https://github.com/TL-UESTC/GuiDG}
\end{links}

\section{Introduction}
\label{sec:intro}

Domain Generalization (DG) \cite{zhou2022domain} aims to learn general knowledge from multiple source domains that is applicable to unseen target distributions. While traditional approaches focus on extracting domain-invariant features \cite{li2018domain}, the emergence of general-purpose vision-language models (VLMs) like CLIP \cite{radford2021learning} has fundamentally changed this landscape. Thanks to extensive pretraining, these models demonstrate remarkable zero-shot generalization capability to novel objects and domains.

However, adapting CLIP to downstream tasks presents a challenge: \textbf{endowing the model with domain-specific knowledge while preserving its zero-shot generalization ability} \cite{wiseft,lai2023padclip}. Current methods tackle this by balancing specialization and generalization during fine-tuning. One line of work employs weight ensemble techniques, combining pretrained and fine-tuned model weights \cite{wiseft} to mitigate over-fitting. Another explores careful training paradigms to prevent catastrophic forgetting of pretrained knowledge \cite{lai2023padclip}.

\begin{figure}[t]
  \centering
  \includegraphics[width=0.96\linewidth]{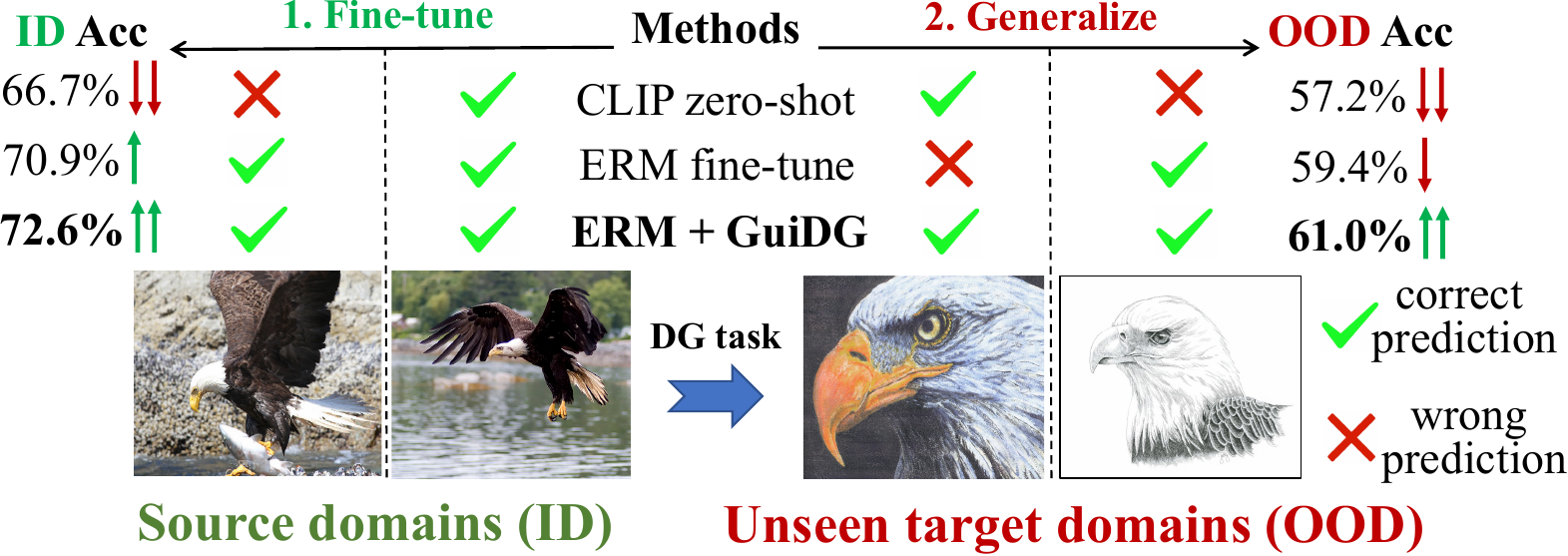}
  \caption{Illustration of the specialization - generalization balance. ERM fine-tuning fits to source knowledge at the cost of generalization ability,  while our GuiDG achieves consistent improvements on both seen and unseen domains.}
  \label{fig1}
\end{figure}

Despite their effectiveness, these approaches are limited by their design of sharing a universal model between source and target data. 
Such claim is supported by real-world examples (sampled from the ImageNet) in \cref{fig1}, where the model is overly fitted to limited source data but lacks adaptability to unknown distributions. A holistic source model cannot guarantee consistent performance across potential target domains. The source fine-tuning may even harm the zero-shot ability in pretrained CLIP. To address such challenge, we first derive a novel upper bound for DG risks that reveals two key insights. (1) Training with partitioned source data and reduced hypothesis space achieves lower generalization risks in fine-tuning. This motivates us to train multiple parameter-efficient prompts to serve as source \textit{domain experts} rather than fine-tuning a universal model. (2) An ensemble of dedicated source models provides more robust generalization than a universal model. Proper combination of experts can dynamically handle  unseen target distributions.

Based on the theoretical insights, we design a two-step framework termed domain-expert-Guided DG (GuiDG). In Step 1, we leverage prompt tuning \cite{coop} to learn domain experts dedicated to model each source domain. Prompt tuning updates less than 1\% of the total parameters in VLMs, providing a natural way to reduce the hypothesis space while capturing domain-specific knowledge. 
As shown in the left of \cref{fig1}, GuiDG discriminates better on source data than naive fine-tuning, which aligns with the benefits of reduced hypothesis space. Step 2 focuses on the combination of the domain experts. We introduce a lightweight Cross-Modal Attention (CMAttn) module that generates weights to determine the contribution of experts. By assigning larger weights to more compatible experts, CMAttn guides the vision encoder to learn better representations, which in turn enables CMAttn to learn better weighting strategies.

Only domain experts are learnable in Step 1. In Step 2, the experts are frozen while the vision encoder and CMAttn module are jointly optimized. Such design maintains computation efficiency while promoting cross-modal information exchange. As illustrated in the right part of \cref{fig1}, GuiDG generalizes well to unseen target domain while retaining zero-shot ability in CLIP.
The contributions of this work include:

\begin{itemize}[leftmargin=10pt]
  \item We conduct theoretical analysis for fine-tuning VLM and derive a novel upper bound for generalization risks. We reveal that, contrary to end-to-end fine-tuning, properly trained and aggregated domain-specific models can achieve better generalization than a single universal model.

  \item Guided by our theoretical findings, we propose GuiDG, a two-step framework that first learns parameter-efficient domain experts on partitioned source data, then employs a Cross-Modal Attention module to adaptively integrate these experts during VLM fine-tuning.

  \item We develop ImageNet-DG, a DG benchmark derived from ImageNet and its variants to evaluate few-shot DG. Extensive experiments demonstrate the consistent performance gains and parameter efficiency of GuiDG.
\end{itemize}

\section{Related Work}
\label{sec:related}

\textbf{Vision-Language Models (VLMs)} \cite{radford2021learning} are derived from web-scale image-text pairs with contrastive learning. VLMs generally feature a vision and text encoder to handle image and text inputs. By comparing vision and text representations, VLMs can make robust zero-shot inference \cite{li2025pataug}. Efforts have been made to adapt VLMs to downstream applications. Prompt-tuning methods \cite{coop,khattak2023maple} learn prompt embeddings to achieve parameter-efficient adaptation in a few-shot style. As a more practical scenario, some methods propose to distill the pretrained VLM to a smaller model for client-side deployment or fine-tuning \cite{addepalli2024leveraging,li2024promptkd}. VLMs' strong zero-shot ability has attracted researches on transfer learning tasks \cite{li2025generalizing}. Some works choose to integrate domain information in prompts \cite{ge2023domain,du2024domain}, while others refine the representation space to match target distribution \cite{11134143,li2024split}.

\textbf{Domain Generalization (DG)} aims to learn general knowledge from multiple source domains generalizes to unseen  domains. Adversarial-based methods \cite{li2018deep,deng2020representation} extract domain-invariant features from source domains via a min-max game between a feature extractor and domain discriminator. Augment-based methods refine source images to improve model generalization ability \cite{islam2024genmix,zhao2024style}. Zhou \textit{et al.} \cite{zhou2021domain,zhou2024mixstyle} mix-up images to synthesize novel domains that enhance model generalization. 
%\cite{qiao2020learning,volpi2018generalizing} generates `hard' images by augment against current optimization gradient in an adversarial style. 
Inspired by meta-learning, some methods try to close domain shift by splitting source data into meta-train and meta-test subsets \cite{li2018learning,khoee2024domain}. With recent advances in VLMs, Chen \textit{et al.} \cite{chen2024practicaldg} propose to solve hybrid DG tasks with perturbation distillation of VLMs. Cheng \textit{et al.} \cite{cheng2024disentangled} utilize pretrained large language models to disentangle text prompts of VLMs for domain-agnostic visual features. Addepalli \textit{et al.} \cite{addepalli2024leveraging} solve white- and black-box DG settings by aligning vision and text representations before distillation. 
%Li \textit{et al.} \cite{li2023simple} train a matching network to select appropriate models for target samples from a model pool. 
While there are attempts on ensemble-based DG~\cite{zhong2022meta,bai2024soft}, we are the first to theoretically support the benefits of such design, and propose a grounded GuiDG framework to reveal the generalization abilities of properly integrated domain-specific knowledge.
%While most DG methods aim to learn  domain-agnostic features, we reveal that learning domain-specific knowledge then generalize them is more effective.

\textbf{Robust Fine-Tuning} focuses on preserving the generalizability of pretrained models while incorporating task-specific knowledge. For VLMs, prompt-tuning-based methods preserve base knowledge and learn new information \cite{zhang2024dept}, or learn instance-conditioned prompts to prevent over-fitting \cite{cocoop}. Unsupervised fine-tuning methods \cite{ueo,tanwisuth2023pouf} are based on zero-shot inference results and apply regularization terms to prevent forgetting. On the fine-tuning of large backbone networks, weight ensemble \cite{wiseft,swad} methods observe that mixing-up several weights in model optimization trajectory improves generalization performance. Adjusting learning rate also proves critical to preventing catastrophic forgetting \cite{wiseft,lai2023padclip}.  MIRO \cite{miro} proposes to constraint newly learned feature representations by oracle representations. 
%Out-of-distribution generalization tasks provide another aspect to evaluate fine-tuning effects \cite{clipood}. 
This paper leverages domain specifics for generalization, which is compatible with existing fine-tuning methods.

\section{Method}
\label{sec:method}

\begin{figure*}[t]
    \centering
    \includegraphics[width=0.85\textwidth]{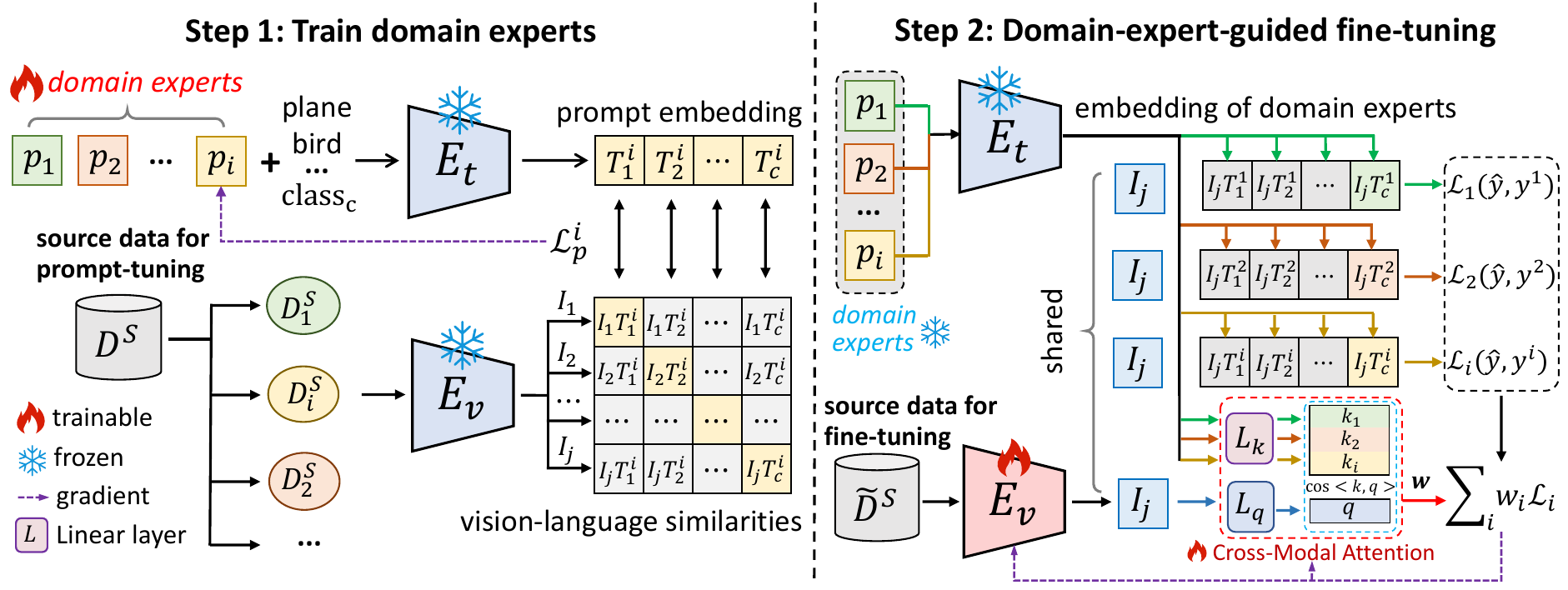}
    \caption{The two-step GuiDG framework. In Step 1, we split source  data  according to their domain characteristics. On each domain, a domain expert is learned with off-the-shelf prompt tuning methods. In Step 2, all domain experts are frozen. A Cross-Modal Attention (CMAttn) module decides ensemble weights from vision and text representations. These weights aggregate the knowledge in domain experts to guide the fine-tuning of the vision encoder, and assemble predictions for inference.}
    \label{fig2}
\end{figure*}

\subsection{Preliminaries}
\label{sec:preliminaries}
\textbf{Problem definition.} This work investigates domain generalizable fine-tuning for $C$-class classification. We have labeled source data partitioned into source domains $D^S=\{D^S_i\}_{i=1}^d$, where $d$ is the number of source domains and $D^S_i=\{(x^i_j,y^i_j)\}_{j=1}^{n^S_i}$ is the $i_{\mathrm{th}}$ source domain with $n^S_i$ labeled samples. The unseen target domain is denoted as $D^T=\{x^t_i\}_{i=1}^{n^t}$. The goal is to train a function $f^S$ on $D^S$ that minimizes prediction error on any unseen $D^T$. %In this work we focus on the classification tasks with $C$ categories. 

\textbf{Preliminaries on CLIP.} We investigate the generalizable fine-tuning of CLIP \cite{radford2021learning}. CLIP features a vision encoder $E_v$ that extract vision representations from input images $x$: $I=E_v(x)$. For each class, one can use a general description $t_c$, e.g., \texttt{ A photo of a [CLASS$_c$]}, for zero-shot classification task, where \texttt{CLASS}$_c$ is the category name of the $c_{\mathrm{th}}$ possible class. The text encoder $E_t$  takes the sentence $t_c$ as the input then generates text representations by $T_c=E_t(t_c)$. By computing cosine similarities ($\cos \left\langle , \right\rangle$) between the vision representation of image $x$ and text representations $\{T_c\}_{c=1}^{C}$ of all classes, we can obtain the probability that $x$ belongs to class $c$ by:
\begin{align}
    P(y=c \mid x)=\frac{\mathrm{exp}(\mathrm{cos} \left\langle I, T_c \right\rangle/\tau)}{\sum_{i=1}^{C}\mathrm{exp}(\mathrm{cos} \left\langle I, T_i \right\rangle /\tau)},
\label{zeroshot}
\end{align}
where $\tau$ is the temperature hyperparameter.

\subsection{Theoretical Formulation}
\label{sec:theory}
% Assume that the sample point $(x_j^i,y_j^i)$ in domain $D_i^S$ follows distribution $P_i=P_{X,i}\times P_{Y\mid X}$, the entire source domain has distribution $P=P_X\times P_{Y\mid X}$ and target domain has distribution $P^\prime=P_X^\prime\times P_{Y\mid X}$, where $P_{Y\mid X}$ is the distribution of $Y$ given $X$ and $P_X=\overset{d}{\underset{i=1}{\sum}}\pi_iP_{X,i},\,P_X^\prime=\overset{d}{\underset{i=1}{\sum}}\pi_i^\prime P_{X,i}$, such that $\pi_i,\pi_i^\prime\geq0$ and $\overset{d}{\underset{i=1}{\sum}}\pi_i^\prime=\overset{d}{\underset{i=1}{\sum}}\pi_i=1$. Assume $n=\overset{d}{\underset{i=1}{\sum}}n_i^S$ and $n_i^S=\pi_in$.
Assume that each source domain follows a distribution $P_i=P_{X,i}\times P_{Y\mid X}$, where $P_{X,i}$ represents the marginal distribution of features and $P_{Y\mid X}$ denotes the conditional distribution of labels given features.
The entire source domains follow a mixture distribution $P=P_X\times P_{Y\mid X}$, where the marginal feature distribution $P_X$ is defined as
$P_X=\sum_{i=1}^d \pi_i P_{X,i}$.
Similarly, the target domain follows distribution $P^\prime=P_X^\prime\times P_{Y\mid X}$ with
$P_X^\prime=\sum_{i=1}^d \pi_i^\prime P_{X,i}$.
Here $\pi_i,\pi_i^\prime\geq 0$ holds for all $i=1,\ldots,d$, and satisfies
$\sum_{i=1}^d \pi_i = \sum_{i=1}^d \pi_i^\prime = 1$.
Let $n=\sum_{i=1}^d n_i^S$ be the total number of source samples. We assume that $n_i^S=\pi_in$ for all $i=1,\ldots,d$ \cite{albuquerque2019generalizing}.
For any hypothesis $h \in \mathcal{H}$ and distribution $P_i$, we define its risk as $\mathcal{E}_i(h)=\mathbb{E}_{x,y\sim P_i}\mathcal{L}(y,h(x))$, where $\mathcal{L}(\cdot,\cdot)$ is a loss function bounded by $c_L$. Our goal is to find $h \in \mathcal{H}$ that minimizes the risk $\mathcal{E}^\prime(h)=\mathbb{E}_{x,y\sim P^\prime} \mathcal{L}(y,h(x))$ on the target distribution $P^\prime$. 
%Let $\widehat{\mathcal{E}}_i(h)=\frac{1}{n_i^S}\sum_{j=1}^{n_i^S} \mathcal{L}(y_j^i,h(x_j^i))$ denote the empirical risk on domain $i$.
The classical approach seeks to train a universal predictor $\widehat{f}$ by empirical risk minimization (ERM) \cite{gulrajani2020search} across all domains:
\begin{align}
    \widehat{f} = \underset{f\in\mathcal{H}}{\arg\min} \, \sum_{i=1}^{d} \pi_i\widehat{\mathcal{E}}_i(f),
\label{eq0}
\end{align}
where $\widehat{\mathcal{E}}_i(h)=\frac{1}{n_i^S} \sum_{j=1}^{n_i^S} \mathcal{L}(y_j^i,h(x_j^i))$ denotes the empirical risk on domain $i$.
Instead, we propose a two-stage approach. First, for each domain $i$ with hypothesis space $\mathcal{H}_i$, we find a domain-specific predictor 
%$\widehat{f}_i=\underset{f\in\mathcal{H}_i}{\arg\min}\widehat{\mathcal{E}}_i(f)$. 
$\widehat{f}_i=\arg \min_{f \in \mathcal{H}_i} \, \widehat{\mathcal{E}}_i(f)$. 
Then, we aggregate these predictors $\{\widehat{f}_i\}_{i=1}^d$ using algorithm $\mathcal{A}$ and an additional independent dataset $\widetilde{D}^S=\{\widetilde{D}_i^S\}_{i=1}^d$, where $\widetilde{D}_i^S=\{(\tilde{x}_j^i,\tilde{y}_j^i)\}_{j=1}^{\widetilde{n}_i^S}$ is also independently drawn from $P_i$. Let $m=\sum_{i=1}^d\widetilde{n}_i^S$ denote the total number of samples in $\widetilde{D}^S$ with $\widetilde{n}_i^S=\pi_i m$. The aggregated predictor $\tilde{f}=\mathcal{A} (\widehat{f}_1,\ldots,\widehat{f}_d;\widetilde{D}^S)$ belongs to hypothesis space $\widetilde{\mathcal{H}}$. For comparison, we redefine $\widehat{f}$ as the minimizer in $\mathcal{H}$ of the empirical loss on $D^S\cup\widetilde{D}^S$.

% Use lemma \ref{lemma: generalb} we can derive following theorem:
\begin{theorem}\label{theo: general}
    Assume hypothesis space $\mathcal{H}$, $\widetilde{\mathcal{H}}$ and $\mathcal{H}_i$ have VC-dimension $d_0$, $\widetilde{d}$ and $d_i$ respectively. There exists constant $C>0$, such that for any $\delta\in(0,1)$ with probability at least $1-3d\delta$, the following inequality hold:
    \begin{align}
        \mathcal{E}^\prime(\widetilde{f})-\overset{d}{\underset{i=1}\sum}&\pi_i^\prime\widehat{\mathcal{E}}_i(\widehat{f}_i)\leq\left( \overset{d}{\underset{i=1}{\sum}}\pi_i^\prime/\sqrt{\pi_i} \right)\sqrt{\dfrac{c_L\log(1/\delta)}{2m}}\notag\\
        &+C\sqrt{\dfrac{\widetilde{d}\log(m)+\log(1/\delta)}{m}}\notag\\
        &+C\overset{d}{\underset{i=1}\sum}\pi_i^\prime\sqrt{\dfrac{d_i\log(n_i^S)+\log(1/\delta)}{n_i^S}}\,,\label{form: boundforAgg}
    \end{align}
    and denote $N=n+m$, with probability at least $1-d\delta$ the following inequality hold:
    \begin{align}
        \mathcal{E}^\prime(\widehat{f})-\overset{d}{\underset{i=1}\sum}\pi_i^\prime\widehat{\mathcal{E}}_i(\widehat{f})\leq C\sqrt{\dfrac{d_0\log(N)+\log(1/\delta)}{N}}\,.\label{form: boundforClassical}
    \end{align}
\end{theorem}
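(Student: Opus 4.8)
The plan is to exploit the one structural fact that makes the target risk tractable: since the target marginal $P_X^\prime=\sum_i\pi_i^\prime P_{X,i}$ is a re-mixing of the \emph{same} source marginals under the \emph{same} conditional $P_{Y\mid X}$, every hypothesis satisfies the exact decomposition $\mathcal{E}^\prime(h)=\sum_{i=1}^d\pi_i^\prime\mathcal{E}_i(h)$. This eliminates any cross-domain divergence term and reduces both inequalities to controlling per-domain generalization gaps $\mathcal{E}_i(h)-\widehat{\mathcal{E}}_i(h)$. I would first record this identity as a one-line lemma, and then treat the classical and aggregated bounds separately.

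For the classical bound I would write
\[
\mathcal{E}^\prime(\widehat{f})-\sum_{i=1}^d\pi_i^\prime\widehat{\mathcal{E}}_i(\widehat{f})=\sum_{i=1}^d\pi_i^\prime\bigl(\mathcal{E}_i(\widehat{f})-\widehat{\mathcal{E}}_i(\widehat{f})\bigr)\le\sup_{h\in\mathcal{H}}\sum_{i=1}^d\pi_i^\prime\bigl(\mathcal{E}_i(h)-\widehat{\mathcal{E}}_i(h)\bigr).
\]
Because $\widehat{f}$ is a \emph{single} empirical minimizer over the combined sample of size $N=n+m$ and $\mathcal{H}$ has VC-dimension $d_0$, I would bound the right-hand side by one uniform-convergence argument (Sauer--Shelah together with a Hoeffding/Talagrand tail) applied to the combined empirical process, treating the $N$ combined points as a single effective sample and folding the mixing-ratio factors $\pi_i^\prime/\pi_i$ into $C$; a union bound over the $d$ domains gives the $1-d\delta$ confidence and the complexity term $C\sqrt{(d_0\log N+\log(1/\delta))/N}$.

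For the aggregated bound the heart of the argument is a split obtained by inserting the $\pi^\prime$-weighted empirical risk on the \emph{independent} set $\widetilde{D}^S$, namely $\widehat{\mathcal{E}}^\prime(h):=\sum_i\pi_i^\prime\widetilde{\mathcal{E}}_i(h)$ where $\widetilde{\mathcal{E}}_i$ denotes the empirical risk measured on $\widetilde{D}_i^S$:
\[
\mathcal{E}^\prime(\widetilde{f})-\sum_{i=1}^d\pi_i^\prime\widehat{\mathcal{E}}_i(\widehat{f}_i)=\underbrace{\bigl(\mathcal{E}^\prime(\widetilde{f})-\widehat{\mathcal{E}}^\prime(\widetilde{f})\bigr)}_{(\mathrm{B})}+\underbrace{\bigl(\widehat{\mathcal{E}}^\prime(\widetilde{f})-\textstyle\sum_{i}\pi_i^\prime\widehat{\mathcal{E}}_i(\widehat{f}_i)\bigr)}_{(\mathrm{A})+(\mathrm{C})}.
\]
Term $(\mathrm{B})$ is the generalization gap of the data-dependent $\widetilde{f}\in\widetilde{\mathcal{H}}$ on $\widetilde{D}^S$, controlled by uniform convergence over $\widetilde{\mathcal{H}}$ (VC-dimension $\widetilde{d}$, sample size $m$), yielding the second term of the stated bound. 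For the remainder I would use that $\mathcal{A}$ returns the empirical minimizer over $\widetilde{\mathcal{H}}$ and that $\widetilde{\mathcal{H}}$ can at least reproduce the per-domain expert selector, so that $\widehat{\mathcal{E}}^\prime(\widetilde{f})\le\sum_i\pi_i^\prime\widetilde{\mathcal{E}}_i(\widehat{f}_i)$; subtracting $\sum_i\pi_i^\prime\widehat{\mathcal{E}}_i(\widehat{f}_i)$ and inserting the population risk $\mathcal{E}_i(\widehat{f}_i)$ separates the remainder into $\sum_i\pi_i^\prime(\widetilde{\mathcal{E}}_i(\widehat{f}_i)-\mathcal{E}_i(\widehat{f}_i))$ and $\sum_i\pi_i^\prime(\mathcal{E}_i(\widehat{f}_i)-\widehat{\mathcal{E}}_i(\widehat{f}_i))$. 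The first is a concentration of the \emph{fixed} predictor $\widehat{f}_i$ — fixed precisely because $\widehat{f}_i$ is trained on $D^S$ and is therefore independent of $\widetilde{D}^S$ — evaluated on $\widetilde{n}_i^S=\pi_i m$ points, so Hoeffding contributes $\sqrt{c_L\log(1/\delta)/(2\pi_i m)}$ per domain and, after $\pi_i^\prime$-weighting and summation, the factor $(\sum_i\pi_i^\prime/\sqrt{\pi_i})\sqrt{c_L\log(1/\delta)/(2m)}$, i.e.\ Term $(\mathrm{A})$. The second is bounded by uniform convergence over each $\mathcal{H}_i$ (VC-dimension $d_i$, sample size $n_i^S$), producing Term $(\mathrm{C})$.

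Finally I would collect the three high-probability events per domain — Hoeffding for $(\mathrm{A})$, VC over $\mathcal{H}_i$ for $(\mathrm{C})$, and VC over $\widetilde{\mathcal{H}}$ for $(\mathrm{B})$ — and take a union bound to obtain the overall $1-3d\delta$ confidence. I expect the main obstacle to be the empirical comparison $\widehat{\mathcal{E}}^\prime(\widetilde{f})\le\sum_i\pi_i^\prime\widetilde{\mathcal{E}}_i(\widehat{f}_i)$: it rests on a structural assumption that the aggregation class $\widetilde{\mathcal{H}}$ (in the method, the Cross-Modal Attention combinations) is expressive enough to dominate the oracle per-domain selection, and on keeping the optimality of $\mathcal{A}$ consistent with the $\pi^\prime$-weighting that defines the target. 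Making this comparison clean, together with the independence between $\widehat{f}_i$ and $\widetilde{D}^S$ that legitimizes the Hoeffding step in Term $(\mathrm{A})$, is where the real care is required; the remaining steps are standard VC uniform-convergence and union-bound bookkeeping.
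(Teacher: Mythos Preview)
Your plan is essentially the paper's proof: the same decomposition $\mathcal{E}'(h)=\sum_i\pi_i'\mathcal{E}_i(h)$, the same three ingredients (VC over $\widetilde{\mathcal{H}}$ for your Term~(B), Hoeffding for the \emph{fixed} $\widehat{f}_i$ via independence from $\widetilde{D}^S$ for Term~(A), VC over each $\mathcal{H}_i$ for Term~(C)), and the same $3d$-event union bound. The one place the paper differs is precisely the obstacle you flag: rather than arguing the comparison $\widehat{\mathcal{E}}'(\widetilde{f})\le\sum_i\pi_i'\widetilde{\mathcal{E}}_i(\widehat{f}_i)$ via ERM-optimality of $\mathcal{A}$ --- which, as you correctly note, does not close because $\mathcal{A}$ cannot optimize an objective weighted by the unknown $\pi'$ --- the paper simply \emph{assumes} that $\mathcal{A}$ returns a $\widetilde{f}$ that is pointwise no worse than each expert on every sample of $\widetilde{D}^S$, giving the per-domain inequality $\widetilde{\mathcal{E}}_i(\widetilde{f})\le\widetilde{\mathcal{E}}_i(\widehat{f}_i)$ directly and making the weighting issue disappear.
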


Denote the right-hand side of \cref{form: boundforAgg} and \cref{form: boundforClassical} as $\mathrm{Upp}(\mathcal{E}^\prime(\widetilde{f}),\delta)$ and $\mathrm{Upp}(\mathcal{E}^\prime(\widehat{f}),\delta)$, we have corollary:
\begin{corollary}\label{corr: general}
    Assume $m=n$, $c_\pi=\sum_{i=1}^{d} \pi_i^\prime/\sqrt{\pi_i}$ and $\sum_{i=1}^{d} \pi_i^\prime\sqrt{2d_i}/\sqrt{\pi_i}\leq c(\delta)\sqrt{d_0}$, where for specified $\delta\in(0,1)$, $c(\delta)=\underset{1\leq i\leq d}{\inf}\sqrt{\dfrac{\log(2n)+(1/d_0)\log(1/\delta)}{\log(n_i^S)+(1/d_i)\log(3/\delta)}}$. We have
    \begin{equation}
        \mathrm{Upp}(\mathcal{E}^\prime(\widetilde{f}),\delta/3)\leq\mathrm{Upp}(\mathcal{E}^\prime(\widehat{f}),\delta)+\varepsilon\,,\label{formula: aggclaineq}
    \end{equation}
    where
    \begin{equation*}
        \varepsilon=c_\pi\sqrt{\dfrac{c_L\log(3/\delta)}{N}}+C\sqrt{\dfrac{2\widetilde{d}\log(N/2)+2\log(3/\delta)}{N}}\,.
    \end{equation*}
\end{corollary}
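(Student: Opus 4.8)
The plan is to compare the two upper bounds term by term after substituting the assumption $m=n$, which gives $N=n+m=2n$, hence $n=m=N/2$ and $n_i^S=\pi_i n=\pi_i N/2$. First I would expand $\mathrm{Upp}(\mathcal{E}^\prime(\widetilde f),\delta/3)$ by replacing $\delta$ with $\delta/3$ in the right-hand side of \cref{form: boundforAgg} (so every $\log(1/\delta)$ becomes $\log(3/\delta)$) and substituting $m=N/2$. The first summand collapses to $c_\pi\sqrt{c_L\log(3/\delta)/N}$ and the second to $C\sqrt{(2\widetilde d\log(N/2)+2\log(3/\delta))/N}$; these are exactly the two contributions defining $\varepsilon$. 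Thus the claimed inequality reduces to showing that the remaining third summand is dominated by $\mathrm{Upp}(\mathcal{E}^\prime(\widehat f),\delta)$, i.e.
\begin{equation*}
C\sum_{i=1}^d \pi_i^\prime \sqrt{\frac{d_i \log(n_i^S) + \log(3/\delta)}{n_i^S}} \le C\sqrt{\frac{d_0 \log(N) + \log(1/\delta)}{N}}.
\end{equation*}

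Next I would cancel $C$ and use $n_i^S=\pi_i N/2$ to rewrite each left-hand term as $(\pi_i^\prime/\sqrt{\pi_i})\sqrt{2(d_i\log n_i^S+\log(3/\delta))/N}$, so that after factoring out $1/\sqrt N$ it suffices to prove
\begin{equation*}
\sum_{i=1}^d \frac{\pi_i^\prime}{\sqrt{\pi_i}} \sqrt{2\bigl(d_i \log(n_i^S) + \log(3/\delta)\bigr)} \le \sqrt{d_0 \log(N) + \log(1/\delta)}.
\end{equation*}
The crucial step is a per-domain bound extracted from the definition of $c(\delta)$. Since $N=2n$ gives $\log(2n)=\log N$, the infimum defining $c(\delta)$ implies, for every $i$,
\begin{equation*}
c(\delta)^2 \le \frac{\log(N) + (1/d_0)\log(1/\delta)}{\log(n_i^S) + (1/d_i)\log(3/\delta)}.
\end{equation*}
Rewriting the denominator as $\tfrac{1}{d_i}\bigl(d_i\log n_i^S+\log(3/\delta)\bigr)$ and rearranging yields $2(d_i\log n_i^S+\log(3/\delta))\le (2d_i/c(\delta)^2)\bigl(\log N+(1/d_0)\log(1/\delta)\bigr)$, so
\begin{equation*}
\sqrt{2\bigl(d_i \log n_i^S + \log(3/\delta)\bigr)} \le \frac{\sqrt{2 d_i}}{c(\delta)} \sqrt{\log(N) + \tfrac{1}{d_0}\log(1/\delta)}.
\end{equation*}

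Finally I would multiply this by $\pi_i^\prime/\sqrt{\pi_i}$, sum over $i$, and pull the common factor $c(\delta)^{-1}\sqrt{\log N+(1/d_0)\log(1/\delta)}$ out of the sum. Invoking the standing assumption $\sum_{i=1}^d \pi_i^\prime\sqrt{2 d_i}/\sqrt{\pi_i}\le c(\delta)\sqrt{d_0}$ cancels $c(\delta)$ and produces $\sqrt{d_0}\,\sqrt{\log N+(1/d_0)\log(1/\delta)}=\sqrt{d_0\log N+\log(1/\delta)}$, which is precisely the required right-hand side. I expect the main obstacle to be purely bookkeeping: keeping the $\delta$ versus $\delta/3$ factors and the $m=N/2$ rescaling consistent so that the first two aggregated terms collapse exactly onto $\varepsilon$, and recognizing that $c(\delta)$ is engineered so its infimum delivers the clean term-by-term bound while the assumed constraint on $\sum_i \pi_i^\prime\sqrt{2d_i}/\sqrt{\pi_i}$ closes the weighted sum. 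No fresh concentration or VC argument is needed, since the corollary is a deterministic consequence of \cref{theo: general} once the parameters are aligned.
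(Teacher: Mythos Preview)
Your proposal is correct and mirrors the paper's own proof almost exactly: identify the first two terms of $\mathrm{Upp}(\mathcal{E}'(\widetilde f),\delta/3)$ as $\varepsilon$ after substituting $m=N/2$, rewrite the third summand via $n_i^S=\pi_i N/2$ to extract the factor $\pi_i'\sqrt{2d_i}/\sqrt{\pi_i}$, then apply the per-index bound furnished by the infimum defining $c(\delta)$ and close with the assumed inequality $\sum_i \pi_i'\sqrt{2d_i}/\sqrt{\pi_i}\le c(\delta)\sqrt{d_0}$. The only difference is that you spell out the intermediate algebraic steps more explicitly than the paper does.
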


\begin{remark}\label{remark1}
    When $\mathcal{H}$ is a parameterized neural network space, assume the number of parameters as $n(\mathcal{H})$. The VC-dimension of $\mathcal{H}$ is approximately $n(\mathcal{H})\log\{n(\mathcal{H})\}$ \cite{bartlett2003vapnik}. When $\widetilde{d}\ll d_0$, which is equivalent to $n(\widetilde{\mathcal{H}})\ll n(\mathcal{H})$, $\varepsilon$ in inequality (\ref{formula: aggclaineq}) is a very small item compared with $\mathrm{Upp}(\mathcal{E}^\prime(\widehat{f}),\delta)$ as $d_0$ is usually large. Corollary \ref{corr: general} suggests that we should seek ways to make $\sum_{i=1}^d \pi_i^\prime\sqrt{2n(\mathcal{H}_i)\log\{n(\mathcal{H}_i)\}}/\sqrt{\pi_i} < c(\delta)\sqrt{n(\mathcal{H})\log\{n(\mathcal{H})\}}$, thereby ensuring the upper bound of ensemble risk is much smaller than that of a universal model, i.e., $\mathrm{Upp}(\mathcal{E}^\prime(\widetilde{f}),\delta/3) < \mathrm{Upp}(\mathcal{E}^\prime(\widehat{f}),\delta)$.
\end{remark}
With carefully constructed $\mathcal{H}_i$ that satisfies condition in Corollary \ref{corr: general}, when $\widehat{\mathcal{E}}_i(\widehat{f}_i)\leq \widehat{\mathcal{E}}_i(\widehat{f})$ (each $\widehat{f}_i$ is trained to minimize empirical risk of its domain with a much smaller hypothesis space, leading to lower empirical risk compared to the universal model $\widehat{f}$ that needs to compromise across all domains), $\mathcal{E}^\prime(\widetilde{f})$ has a tighter upper bound than $\mathcal{E}^\prime(\widehat{f})$ on $P^\prime$. The proofs on Theorem \ref{theo: general} and Corollary \ref{corr: general}, as well as an illustrative toy example on Remark \ref{remark1} are in Appendix.

\subsection{Learning Domain Experts}
Current  methods \cite{wiseft} fully fine-tune a universal model on all source data. However, Remark \ref{remark1} indicates that  an ensemble of parameter-efficient domain-specific models brings better generalization ability. Motivated by such finding, we propose to learn \textbf{dedicated} function $\widehat{f}_i$ from each carefully designed $\mathcal{H}_i$ to ensure $\sum_{i=1}^{d} \pi_i^\prime\sqrt{2d_i}/\sqrt{\pi_i}\leq c(\delta)\sqrt{d_0}$. More details are in Appendix. The learned experts incorporate more specific domain knowledge into the default text description, e.g., \textit{A \underline{[real/art/...]} photo of a [CLASS].}

We learn the domain experts in a few-shot style \cite{coop}, while any  prompt-tuning  method is viable. On domain $D^S_i$, we construct the learnable prompt for class $j$ as: 
\begin{align}
    t^i_j=[p_{i1}][p_{i2}]...[p_{im}][\mathrm{CLASS_j}],
\end{align}
where $\mathbf{p_i}=[p_{i1}][p_{i2}]...[p_{im}]$ is domain expert, $m$ is length of expert, and [CLASS$_\mathrm{j}$] is embedding of class $j$. The text embedding of class $j$ in domain $i$ is  obtained: $T^i_j=E_t(t^i_j)$. The probability of image $x^i$ belonging to class $j$ is obtained:
\begin{align}
    P_j(\hat{y}\mid x^i,t^i_{1:C})=\frac{\mathrm{exp}(\mathrm{cos} \left\langle E_v(x^i), T^i_j \right\rangle /\tau)}{\sum_{c=1}^{C}\mathrm{exp}(\mathrm{cos}\left\langle E_v(x^i), T^i_c \right\rangle /\tau)}.
\label{eq3}
\end{align}
Standard cross-entropy loss is used to learn $\mathbf{p_i}$:
\begin{align}
    \mathcal{L}^i_p= - \sum_{j=1}^{n_i^S} \sum_{c=1}^{C} [y_j^i=c] \cdot \log P_c(\hat{y}\mid x^i_j,t^i_{1:C}),
\label{lp}
\end{align}
where $[\cdot]$ is indicator function. Both $E_v$ and $E_t$ are frozen during the optimization of \cref{lp}. The $i_{\text{th}}$ domain expert is obtained by optimizing  \cref{lp}: $\mathbf{p_i} = \underset{\mathbf{p_i}}{\arg\min}  \, \mathcal{L}^i_p. $

\subsection{Domain-Expert-Guided Fine-Tuning}
With trained domain experts $\mathbf{p_i}$ to serve as dedicated domain predictors $\widehat{f}_i$, we proceed to learn algorithm $\mathcal{A}$ that aggregates the domain knowledge in $\{\widehat{f}_i\}_{i=1}^d$ to guide the fine-tuning of CLIP. 
As shown in Step 2 of \cref{fig2}, we design a Cross-Modal Attention (CMAttn) module to approximate the aggregation algorithm $\mathcal{A}$.  CMAttn is composed of a linear layer $L_q$ that transforms vision features into query embeddings: $q(x)=L_q(E_v(x))$, and a linear layer $L_k$ to transform text features into key embeddings: $k_i=L_k(E_t(t_i))$. We then compute the normalized cosine similarities between a query embedding and key embeddings of all domain experts to obtain ensemble weights:
\begin{align}
    \mathbf{w}(x) = \mathrm{Softmax}(\mathrm{cos} \left\langle q(x), [k_1,k_2,\cdots,k_d] \right\rangle ),
\label{attn}
\end{align}
where $\mathbf{w}(x)=(w_1(x),w_2(x),...,w_d(x))$ is the weight vector, and $[\cdot]$ is the concatenation operation. CMAttn learns to assign larger weights to more compatible experts, and smaller weights to irrelevant experts. The training data in Step 2 include data from all available source domains, therefore CMAttn can learn various weighting scenarios that are generalizable to unseen target domains. 
For an image input, different classification results are obtained from each domain expert via \cref{eq3}. Instead of computing their weighted average before optimization, we propose to weight the training losses. We observe that such design brings more direct and effective optimization for both the image encoder and CMAttn module. As instructed in Theoretical Formulation, we train CMAttn on $\widetilde{D}^S$ with $m$ samples that are \textit{i.i.d} with $D^S$. Combining \cref{eq3}, the training loss for Step 2 is defined:
%The fine-tuning loss for target domain $d_t$ is defined as:
\begin{align}
    \mathcal{L}_{f} = \sum_{i=1}^{d} \sum_{j=1}^{\widetilde{n}^S_i} w_i(x_j^i)  \left( - \sum_{c=1}^{C} [y_j^i=c]  \log P_c(\hat{y}|x_j^i,t^i_{1:C}) \right).
    %\mathcal{L}_{f} = \sum_{i=1}^{d} \sum_{j=1}^{\widetilde{n}^S_i} w_i(x_j^i) \cdot \left( - \sum_{c=1}^{C} \bm{1}[y_j^i=c] \cdot \log P_c(\hat{y}|x_j^i,t^i_{1:C}) \right).
\label{lf}
\end{align} 
%where $i \neq d_t$ excludes current target domain $d_t$ from available data. 
During the optimization of \cref{lf}, only the CMAttn and the vision encoder are trainable.

\begin{table*}[!t]
    %\vspace{-0.2cm}
    \centering
    \small
    \resizebox{0.95\linewidth}{!}{
    \begin{tabular}{l|ccccc|ccccccc}
    \hline
     & \multicolumn{5}{c}{OfficeHome} & \multicolumn{7}{|c}{DomainNet} \\
    \multirow{-2}{*}{Method} & Art & Clp & Prod & RW & Avg. & clp & inf & pnt & qdr & rel & skt & Avg. \\ \hline
    %CLIP-zeroshot \cite{radford2021learning} & 82.9 & 67.8 & 89.0 & 89.8 & 82.4 & 70.1 & 46.4 & 61.7 & 13.7 & 82.9 & 62.6 & 56.2 \\ \hline
    CLIP-zeroshot  & 82.9 & 67.8 & 89.0 & 89.8 & 82.4 & 70.1 & 46.4 & 61.7 & 13.7 & 82.9 & 62.6 & 56.2 \\ \hline
    \rowcolor[HTML]{D9D9D9} 
    \textbf{Full data} &  &  &  &  &  &  &  &  &  &  &  &  \\
    %RISE \cite{rise} & 75.1 & 69.2 & 84.4 & 85.0 & 78.4 & 77.8 & 31.3 & 57.6 & \textbf{24.6} & 74.0 & 66.9 & 55.4 \\
    MIRO \cite{miro} & 83.6 & 75.7 & 89.7 & 90.2 & 84.8 & 79.7 & 43.5 & 67.4 & \textbf{24.6} & 79.2 & 68.4 & 60.5 \\
    WiSE-FT \cite{wiseft} & 85.2 & 76.2 & 92.9 & 91.0 & 86.3 & 76.8 & 49.5 & 69.4 & 20.1 & 81.7 & 67.2 & 60.8 \\
    VL2V-SD \cite{addepalli2024leveraging} & 87.3 & \textbf{78.6} & 92.0 & 91.7 & 87.4 & \textbf{80.0} & 49.0 & 71.1 & 23.3 & 82.1 & \textbf{71.4} & 62.8 \\
    ERM (WF)*  & 84.9 & 71.2 & 92.4 & 92.0 & 85.1$_{\pm0.2}$ & 74.5 & 49.5 & 69.6 & 16.1 & 84.5 & 66.7 & 60.2$_{\pm0.1}$ \\
    %ERM (WF)* \cite{gulrajani2020search} & 84.9 & 71.2 & 92.4 & 92.0 & 85.1$_{\pm0.2}$ & 74.5 & 49.5 & 69.6 & 16.1 & 84.5 & 66.7 & 60.2$_{\pm0.1}$ \\
    \rowcolor[HTML]{DDEBF7}
    ERM (WF) + GuiDG* & 85.9 & 71.7 & 92.6 & 92.3 & 85.6$_{\pm0.2}$  & 76.0 & 53.1 & 70.9 & 17.3 & 84.8 & 68.2 & \underline{61.7}$_{\pm0.2}$  \\
    UEO (WF)* \cite{ueo} & 85.6 & 72.8 & \textbf{93.2} & 92.5 & 86.0$_{\pm0.3}$ & 75.5 & 50.8 & 70.2 & 16.9 & 84.6 & 66.8 & 60.8$_{\pm0.1}$ \\
    \rowcolor[HTML]{DDEBF7}
    UEO (WF) + GuiDG* & 86.8 & 73.6 & 92.9 & 93.4 & \underline{86.7}$_{\pm0.3}$  & 76.5 & 52.9 & 71.1 & 17.8 & 85.0 & 68.9 & 62.0$_{\pm0.2}$  \\
    CLIPood* \cite{clipood} & 87.8 & 73.8 & 92.7 & 92.9 & 86.8$_{\pm0.2}$ & 77.6 & \textbf{54.6} & \textbf{72.7} & 20.8 & 85.2 & 69.7 & \textbf{63.4$_{\pm0.1}$} \\
    \rowcolor[HTML]{DDEBF7} 
    CLIPood + GuiDG* & \textbf{89.1} & 74.6 &  92.1 & \textbf{93.6} & \textbf{87.4$_{\pm0.2}$} & 77.7 & 54.3 & 72.5 & 20.4 & \textbf{85.3} & 69.9 & \textbf{63.4$_{\pm0.2}$} \\ \hline
    \rowcolor[HTML]{D9D9D9} 
    \textbf{16-shot} &  &  &  &  &  & \textbf{} &  &  &  &  &  &  \\
    ERM (WF)*  & 86.0 & 69.5 & 92.3 & 93.0 & 85.2$_{\pm0.3}$ & 73.9 & 49.8 & 68.3 & 16.3 & 84.6 & 65.6 & 59.8$_{\pm0.2}$ \\
    \rowcolor[HTML]{DDEBF7} 
    ERM (WF) + GuiDG* & 86.4 & 70.0 & 93.1 & 91.6 & 85.3$_{\pm0.3}$  & 74.2 & 51.6 & 68.5 & 15.0 & 84.9 & 66.8 & 60.2$_{\pm0.2}$  \\
    UEO (WF)* \cite{ueo} & 85.4 & 68.1 & 92.7 & 92.8 & 84.8$_{\pm0.3}$ & 74.1 & 52.9 & 68.3 & 14.9 & 85.1 & 66.9 & 60.4$_{\pm0.2}$ \\
    \rowcolor[HTML]{DDEBF7} 
    UEO (WF) + GuiDG* & 87.2 & 71.1 & 93.2 & 92.6 & \underline{86.0}$_{\pm0.2}$  & 75.4 & 52.9 & 69.9 & 17.6 & \textbf{85.4} & 68.3 & \underline{61.6}$_{\pm0.2}$  \\
    CLIPood* \cite{clipood} & 86.2 & 71.7 & 92.7 & 93.2 & 86.0$_{\pm0.3}$ & \textbf{77.3} & 53.0 & 70.9 & 19.3 & 85.1 & 68.2 & 62.3$_{\pm0.1}$ \\
    \rowcolor[HTML]{DDEBF7} 
    CLIPood + GuiDG* & \textbf{87.6} & \textbf{72.9} & \textbf{93.4} & \textbf{93.4} & \textbf{86.8$_{\pm0.4}$} & 76.8 & \textbf{53.7} & \textbf{71.7} & \textbf{20.3} & 85.2 & \textbf{68.9} & \textbf{62.8$_{\pm0.2}$} \\ \hline
    \rowcolor[HTML]{D9D9D9}  
    \textbf{8-shot} &  &  &  &  &  & \textbf{} &  &  &  &  &  & \\
    ERM (WF)*  & 82.9 & 65.3 & 90.5 & 92.2 & 82.7$_{\pm0.4}$ & 71.5 & 46.3 & 66.2 & 13.9 & 83.8 & 63.9 & 57.6$_{\pm0.2}$\\
    \rowcolor[HTML]{DDEBF7} 
    ERM (WF) + GuiDG* & 85.8 & 69.7 & \textbf{93.0} & 92.1 & \underline{85.2}$_{\pm0.3}$  & 73.9 & 51.5 & 68.4 & 14.9 & 85.0 & 66.2 & \underline{60.0}$_{\pm0.2}$  \\
    UEO (WF)* \cite{ueo} & 85.1 & 67.4 & 92.2 & 92.3 & 84.3$_{\pm0.2}$ & 74.3 & 51.6 & 68.7 & 15.1 & 84.9 & 66.5 & 60.2$_{\pm0.2}$ \\
    \rowcolor[HTML]{DDEBF7} 
    UEO (WF) + GuiDG* & 86.2 & 72.1 & 92.3 & 92.7 & 85.8$_{\pm0.3}$  & 76.1 & 52.7 & 70.2 & 17.9 & \textbf{85.3} & 67.9 & 61.7$_{\pm0.3}$  \\
    CLIPood* \cite{clipood} & \textbf{86.8} & 69.9 & 92.7 & \textbf{93.9} & 85.8$_{\pm0.3}$ & \textbf{76.9} & 51.7 & 70.5 & \textbf{18.9} & 84.7 & 68.0 & 61.8$_{\pm0.2}$ \\
    \rowcolor[HTML]{DDEBF7} 
    CLIPood + GuiDG* & 86.6 & \textbf{73.5} & 92.4 & 92.9 & \textbf{86.4$_{\pm0.3}$} & 76.1 & \textbf{53.1} & \textbf{71.0} & 18.8 & 85.2 & \textbf{68.5} & \textbf{62.1$_{\pm0.2}$} \\\hline
    \multicolumn{3}{l}{* Results based on our own runs.}
    \end{tabular}}
    \caption{DG results of GuiDG. Best results are in bold. Most significant improvements by incorporating GuiDG are underlined.}
    \label{tab1}
    %\vspace{-0.2cm}
\end{table*}

\subsection{Train and Inference}
As shown in \cref{fig2}, GuiDG consists of two training steps. In Step 1, we train $d$ independent domain experts $\mathbf{p_1},\mathbf{p_2},\cdots,\mathbf{p_d}$ by optimizing \cref{lp}. In Step 2, CMAttn and the vision encoder are trained by minimizing \cref{lf}. Step 2 is compatible with existing regularization terms (detailed in Appendix) for robust fine-tuning, therefore we have:
\begin{align}
    \theta_{E_v}, \theta_{L_q}, \theta_{L_k} =  \underset{\theta_{E_v}, \theta_{L_q}, \theta_{L_k}}{\arg \, \min} \, \mathcal{L}_f + \alpha \mathcal{L}_r,
    \label{opt2}
\end{align}
where $\theta_{E_v}, \theta_{L_q}, \theta_{L_k}$ are parameters in the vision encoder and CMAttn, $\mathcal{L}_r$ is off-the-shelf regularization loss for fine-tuning, and $\alpha$ controls the regularization effects.

During inference, the VLM generates $d$ sets of logits from each domain expert given target data $x^t$. CMAttn assigns proper weights $w_i$ for each output. Assume hidden variable $\mathcal{I}(x^t)$ indicating the index of $\widehat{f}_i$ that best predicts $y^t$. For class $c$, by conditioning on $\{\widehat{f}_i\}_{i=1}^d$ we have
$\mathbb{P}\left( y^t=c\mid x^t \right) = \sum_{i=1}^{d} \mathbb{P}\left( \mathcal{I}(x^t)=i \right)\mathbb{P}\left( y^t=c\mid x^t,\mathcal{I}(x^t)=i \right).$
In our design, weight $w_i(x^t)$ is an estimator of $\mathbb{P}\left( \mathcal{I}(x^t)=i \right)$ and $\cos \left\langle E_v(x^t),T^i_c \right\rangle / \tau$ is an estimator approximately proportion to $\mathbb{P}\left( y^t=c\mid x^t,\mathcal{I}(x^t)=i \right)$.
Thus, the weighted average of outputs serve as the final inference results $\hat{y}^t$: 
\begin{align}
    \hat{y}^t =  \underset{c}{\arg \max} \, \sum_{i=1}^{d} w_i(x^t) \cdot \cos \left\langle E_v(x^t),T^i_c \right\rangle / \tau.
    \label{eq10}
\end{align}

\section{Experiments}
\subsection{Setup}
\label{sec:setup}
\textbf{Benchmark.} We conduct experiments on standard domain generalization benchmarks. All experiments are repeated 5 times with different seeds and means were reported. \textbf{OfficeHome} \cite{venkateswara2017deep} includes 4 domains with 65 categories of office items. \textbf{VLCS} \cite{torralba2011unbiased} includes 4 domains with 5 classes. \textbf{PACS} \cite{li2017deeper} provides 4 art-style domains with 7 classes. \textbf{DomainNet} \cite{peng2019moment} contains 0.6 million samples from 6 domains and 345 categories. \textbf{TerraIncognita (TI)} \cite{beery2018recognition} includes 10 classes of animal pictures taken in 4 different  locations. 
We notice  existing DG benchmarks include limited  classes and instances, hindering sufficient evaluation of modern models like CLIP. Therefore, we sample data from ImageNet \cite{deng2009imagenet} and its variants \cite{hendrycks2021natural,hendrycks2021many,wang2019learning,recht2019imagenet} to construct a new subset \textbf{ImageNet-DG}. We also experiment on single-source DG, where the model is fine-tuned on ImageNet and generalizes to  ImageNet variants.

\begin{table}[t]
    \centering
    \small
    \resizebox{\linewidth}{!}{
    \begin{tabular}{l|cccc}
    \hline
    Method & PACS & VLCS & TI & Avg. \\ \hline
    CLIP \cite{radford2021learning} & 96.2 & 81.8 & 33.8 & 70.6 \\
    MIRO \cite{miro} & 95.6 & 82.2 & 54.3 & 77.4 \\
    SWAD \cite{swad} & 91.4 & 79.1 & 42.9 & 71.1 \\
    WiSE-FT \cite{wiseft} & 97.3 & 82.9 & 54.5 & 78.2 \\
    RISE \cite{rise} & 93.3 & 80.6 & 49.6 & 74.5 \\
    VL2V-SD \cite{addepalli2024leveraging} & 96.7 & 83.3 & 58.5 & 79.5 \\
    DSPL \cite{cheng2024disentangled} & 97.5 & \textbf{86.4} & 57.1 & 80.3 \\
    ERM (WF)* & 96.7$_{\pm0.4}$ & 83.3$_{\pm0.2}$ & 51.0$_{\pm0.4}$ & 77.0 \\
    \rowcolor[HTML]{DDEBF7} 
    ERM (WF) + GuiDG*  & \textbf{\underline{97.6}}$_{\pm0.2}$ & 83.8$_{\pm0.2}$ & \underline{52.9}$_{\pm0.5}$ & 78.1 \\
    UEO (WF)*  \cite{ueo} & 96.9$_{\pm0.2}$ & 81.3$_{\pm0.3}$ & 51.5$_{\pm0.3}$ & 76.6 \\
    \rowcolor[HTML]{DDEBF7} 
    UEO (WF) + GuiDG* & \textbf{97.6$_{\pm0.2}$} & \underline{84.2}$_{\pm0.2}$ & 52.3$_{\pm0.4}$ & \underline{78.0} \\
    CLIPood*  \cite{clipood} & 97.3$_{\pm0.1}$ & 84.2$_{\pm0.2}$ & 60.0$_{\pm0.6}$ & 80.5 \\
    \rowcolor[HTML]{DDEBF7} 
    CLIPood +  GuiDG*  & 97.3$_{\pm0.1}$ & 84.3$_{\pm0.2}$ & \textbf{60.9$_{\pm0.4}$} & \textbf{80.8} \\ \hline
    \multicolumn{3}{l}{* Results based on our own runs.}
    \end{tabular}}
    \caption{DG results on PACS, VLCS and TI. Best results are in bold. Most significant improvements are underlined.}
    \label{tab2}
\end{table}

\begin{table*}[!t]
    \centering
    \small
    %\resizebox{0.93\linewidth}{!}{
    %\setlength{\tabcolsep}{8pt}
    \centering
    \begin{tabular}{l|cccc|cccccc}
    \hline
     & \multicolumn{4}{c}{ImageNet-DG} & \multicolumn{6}{|c}{Single-source ImageNet} \\ %\cline{2-11}
     & \multicolumn{3}{c}{source = I, S, V2} & \multicolumn{1}{c}{} & \multicolumn{5}{|c}{source = I} & \multicolumn{1}{c}{} \\\cline{2-4} \cline{6-10}
    \multirow{-3}{*}{Method} & \multicolumn{1}{c}{A} & \multicolumn{1}{c}{I} & \multicolumn{1}{c}{R} & \multicolumn{1}{c}{\multirow{-2}{*}{Avg.}} & \multicolumn{1}{|c}{A} & \multicolumn{1}{c}{I} & \multicolumn{1}{c}{R} & \multicolumn{1}{c}{S} & \multicolumn{1}{c}{V2} & \multicolumn{1}{c}{\multirow{-2}{*}{Avg.}} \\ \hline
    CLIP-zeroshot \cite{radford2021learning} & 47.0 & 66.7 & 73.9 & 62.6 & 47.8 & 66.7 & 74.0 & 46.1 & 60.8 & 59.1 \\\hline
    \rowcolor[HTML]{D9D9D9} 
    \textbf{16-shot} &  &  &  & &  &  &  &  &  &  \\
    PromptSRC$^\dagger$ \cite{khattak2023self} & \textbf{51.0} & 72.1 & 79.1 & 67.4 & 50.9 & 71.3 & 77.8 & 49.6 & 64.4 & 62.8 \\
    Apex$^\dagger$ \cite{yang2023towards} & 50.6 & 72.5 & 78.7 & 67.3 & 50.7 & 72.0 & 76.8 & 48.5 & 64.7 & 62.5 \\
    ERM (WF)* \cite{gulrajani2020search} & 48.9 & 70.7 & 78.9 & 66.2$_{\pm0.1}$ & 49.1 & 70.9 & 75.8 & 48.4 & 64.1 & 61.7$_{\pm0.1}$ \\
    \rowcolor[HTML]{DDEBF7} 
    ERM (WF) +  GuiDG* & 50.2 & 72.8 & 80.8 & 67.9$_{\pm0.2}$ & \textbf{51.1} & 72.6 & 77.9 & 49.7 & 65.3 & 63.3$_{\pm0.1}$ \\
    UEO (WF)* \cite{ueo} & 50.5 & 69.8 & 77.2 & 65.8$_{\pm0.2}$ & 48.6 & 71.9 & 75.9 & 48.6 & 65.1 & 62.0$_{\pm0.1}$ \\
    \rowcolor[HTML]{DDEBF7} 
    UEO (WF) +   GuiDG* & 50.8 & 72.9 & 80.7 & \underline{68.1}$_{\pm0.2}$ & 51.0 & \textbf{73.5} & \textbf{78.1} & 50.1 & 66.2 & \textbf{\underline{63.8}$_{\pm0.2}$} \\
    CLIPood* \cite{clipood} & 49.8 & 71.8 & 81.1 & 67.6$_{\pm0.1}$ & 50.4 & 71.6 & 77.2 & 49.3 & 64.9 & 62.7$_{\pm0.1}$ \\
    \rowcolor[HTML]{DDEBF7} 
    CLIPood +   GuiDG* & 50.9 & \textbf{73.0} & \textbf{81.8} & \textbf{68.6$_{\pm0.1}$} & 49.6 & 73.4 & 77.7 & \textbf{50.4} & \textbf{66.3} & 63.5$_{\pm0.1}$ \\\hline
    \rowcolor[HTML]{D9D9D9} 
    \textbf{8-shot} &  &  &  &  &  &  &  &  &  &  \\
    ERM (WF)* \cite{gulrajani2020search} & 49.0 & 69.8 & 77.4 & 65.4$_{\pm0.1}$ & 47.9 & 70.3 & 75.9 & 48.2 & 63.3 & 61.1$_{\pm0.1}$ \\
    \rowcolor[HTML]{DDEBF7} 
    ERM (WF) +   GuiDG* & \textbf{50.7} & 71.3 & 79.4 & \underline{67.1}$_{\pm0.1}$ & \textbf{50.9} & 71.9 & \textbf{78.1} & 49.5 & 64.8 & 63.0$_{\pm0.1}$ \\
    UEO (WF)* \cite{ueo} & 50.5 & 70.0 & 77.0 & 65.8$_{\pm0.1}$ & 48.0 & 70.3 & 76.2 & 48.5 & 63.4 & 61.3$_{\pm0.2}$ \\
    \rowcolor[HTML]{DDEBF7} 
    UEO (WF) +   GuiDG* & 50.4 & 71.7 & 80.1 & 67.4$_{\pm0.1}$ & 50.5 & 72.8 & 78.0 & 49.8 & 65.6 & \textbf{63.3$_{\pm0.2}$} \\
    CLIPood* \cite{clipood} & 50.0 & 71.6 & 80.5 & 67.4$_{\pm0.2}$ & 45.5 & 71.7 & 75.2 & 47.9 & 64.8 & 61.0$_{\pm0.1}$ \\
    \rowcolor[HTML]{DDEBF7} 
    CLIPood +   GuiDG* & 50.3 & \textbf{71.8} & \textbf{81.2} & \textbf{67.8$_{\pm0.2}$} & 49.5 & \textbf{73.0} & 77.3 & \textbf{50.2} & \textbf{65.7} & \underline{63.1}$_{\pm0.1}$ \\ \hline
    \multicolumn{5}{l}{* Results based on our own runs. $^\dagger$ Prompt-tuning baselines.}
    \end{tabular}
    \caption{DG results on ImageNet-DG and single-source DG on ImageNet and its variants. The `source' row indicates the source domain(s) used for fine-tuning. Best results are in bold. Most significant  improvements by incorporating GuiDG are underlined.}
    \label{tab3}
\end{table*}

\textbf{Implementation details.} 
We use CLIP ViT-B/16 \cite{radford2021learning} on all experiments. The temperature in \cref{eq3}, \cref{eq10} are set to 0.01 as in original CLIP. The query transformer $L_q$ in CMAttn is a linear layer of shape ($\mathrm{d_f}, \mathrm{d_f}$) where $\mathrm{d_f}$ is dimension of the feature representations in CLIP. $L_k$ is a linear layer of shape ($C,1$) that transforms prompt embedding of $C$ classes to 1. These linear layers in CMAttn introduce ~1M parameters, ensuring our method's parameter efficiency.
On all datasets, we follow leave-one-out paradigm to test one unseen target domain with others as source domains each time. On VLCS, PACS and TerraIncognita with fewer classes, all data  are used. On OfficeHome, DomainNet and ImageNet-DG we adopt few-shot fine-tuning protocol to  evaluate the few-shot generalization ability. To satisfy the data independence requirements in section Theoretical Formulation, we randomly split each domain to construct disjoint $D^S$ for phase 1 and $\widetilde{D}^S$ for phase 2. We adopt AdamW \cite{loshchilov2017decoupled} with learning rate 5e-6 on all tasks. The regularization loss $\mathcal{L}_r$ and $\alpha$ in \cref{opt2} are set according to original baseline methods, with more details in Appendix.  %All experiments are conducted on one NVIDIA RTX 4090 GPU. 

\begin{table}[t]
    %\vspace{-0.2cm}
    \centering
    \small
    %\resizebox{0.76\linewidth}{!}{
    \begin{tabular}{cc|cc|c}
    \hline
    \multicolumn{2}{c|}{Step 1} & \multicolumn{2}{c|}{Step 2} & results \\ \hline
    \begin{tabular}[c]{@{}c@{}}multiple\\ experts\end{tabular} & \begin{tabular}[c]{@{}c@{}}i.i.d\\ data\end{tabular} & \begin{tabular}[c]{@{}c@{}}no\\ weights\end{tabular} & \begin{tabular}[c]{@{}c@{}}learnable\\ weights\end{tabular} & \begin{tabular}[c]{@{}c@{}} Avg.\end{tabular} \\ \hline
        &  & \checkmark &  & 64.9 \\
    \checkmark &  & \checkmark &  & 67.0 \\
    \checkmark &  &  & \checkmark & 67.3 \\
        & \checkmark & \checkmark &  & 66.8  \\
    \checkmark & \checkmark & \checkmark &  & 67.6 \\
    \checkmark & \checkmark &  & \checkmark & \textbf{67.9} \\ \hline
    \end{tabular}
    %\vspace{-0.3cm}
    %\vspace{-8pt}
    \caption{Ablation study on ImageNet-DG. }
    \label{tab4}
\end{table}

\begin{figure}[t]
    \centering
    \includegraphics[width=\linewidth]{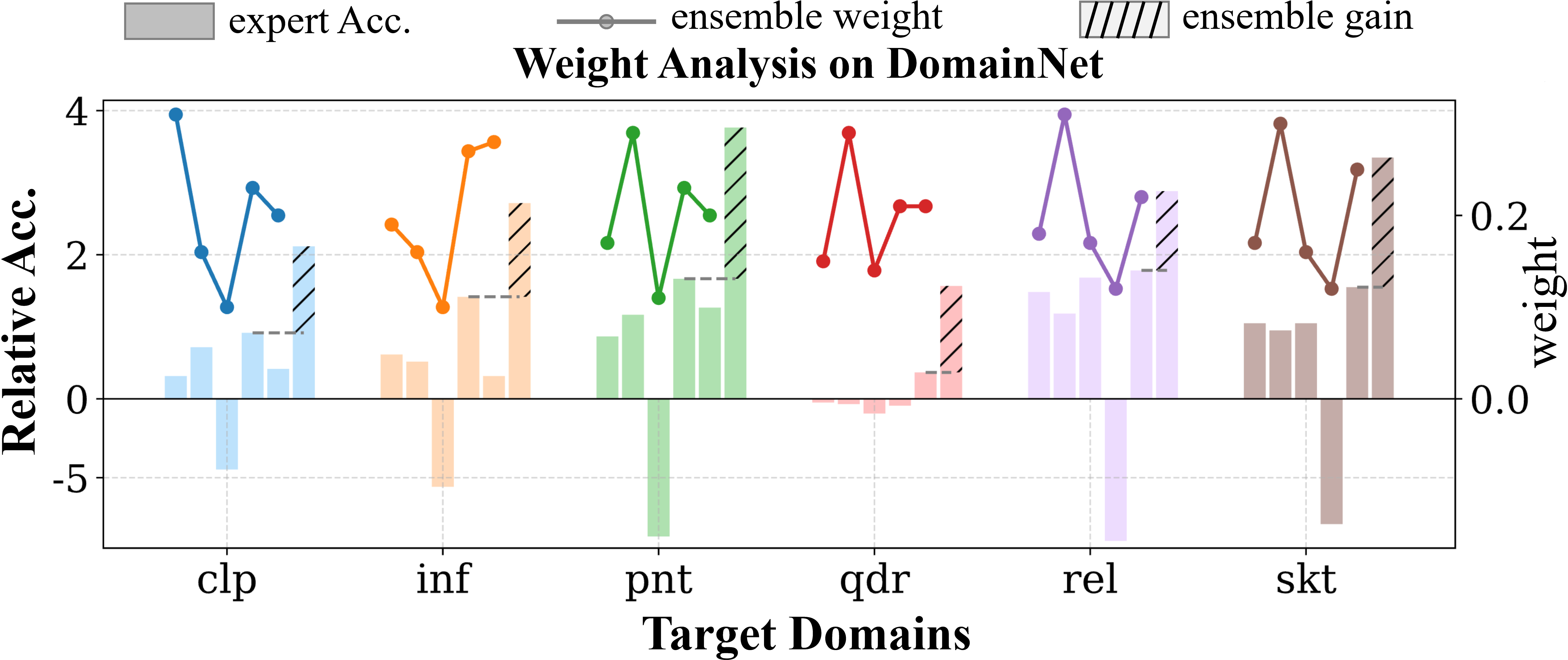}
    \caption{Bars and lines are relative accuracies (average accuracy subtracted) and weights of domain experts.  The rightmost bar in  each group shows the gains by prompt ensemble.}
    \label{fig4}
\end{figure}

\begin{figure*}[!t]
    \centering
    \subfloat[OH-Step 1]{\includegraphics[width=0.225\textwidth]{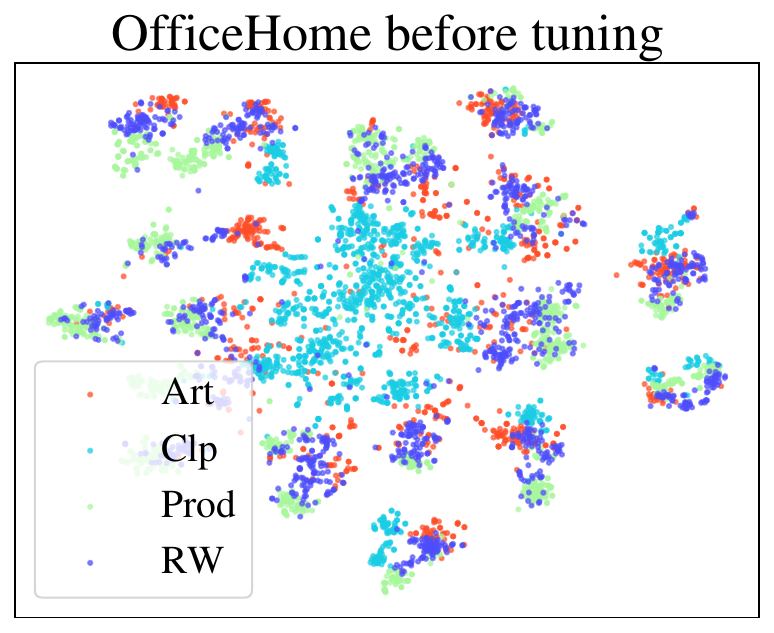}\label{fig3a}}
    \hfil
    \subfloat[OH-Step 2]{\includegraphics[width=0.225\textwidth]{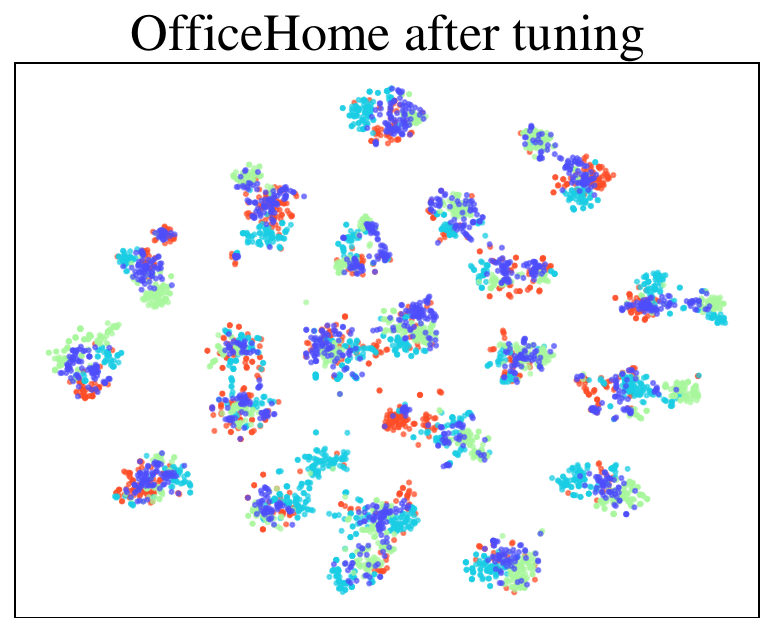}\label{fig3b}}
    \hfil
    \subfloat[DN-Step 1]{\includegraphics[width=0.225\textwidth]{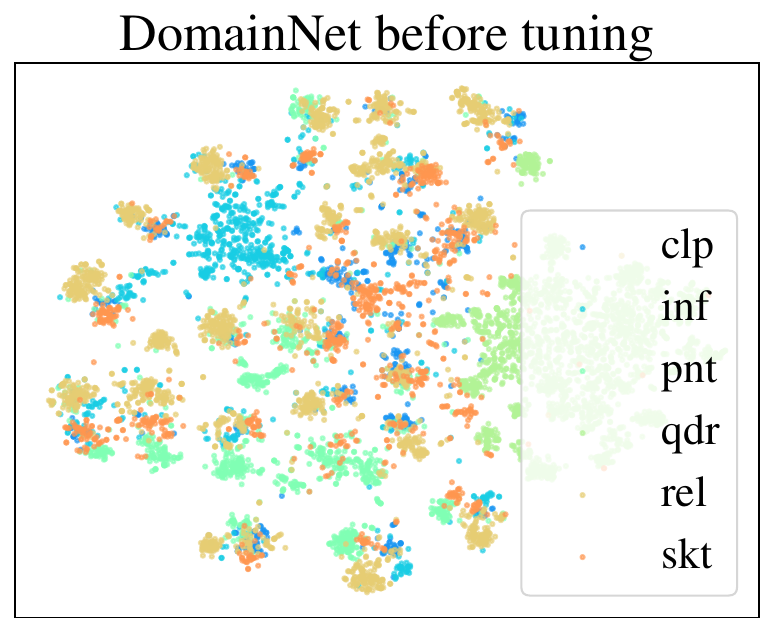}\label{fig3c}}
    \hfil
    \subfloat[DN-Step 2]{\includegraphics[width=0.225\textwidth]{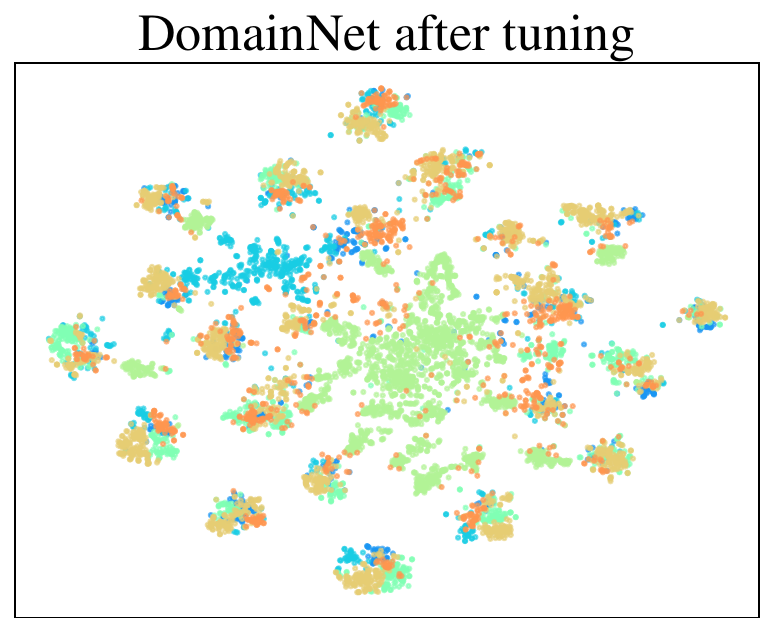}\label{fig3d}}
    %\vspace{-4pt}
    %\vspace{-0.1cm}
    \caption{Vision features of source and target data before and after fine-tuning. (a),(b) Results obtained on domain `Art' of OfficeHome. (c),(d) Results obtained on domain `clp' of DomainNet.}
    \label{fig3}
    %\vspace{-0.2cm}
\end{figure*}

\begin{figure}[t]
    \centering
    \subfloat[Results on Office-Home]{\includegraphics[width=0.23\textwidth]{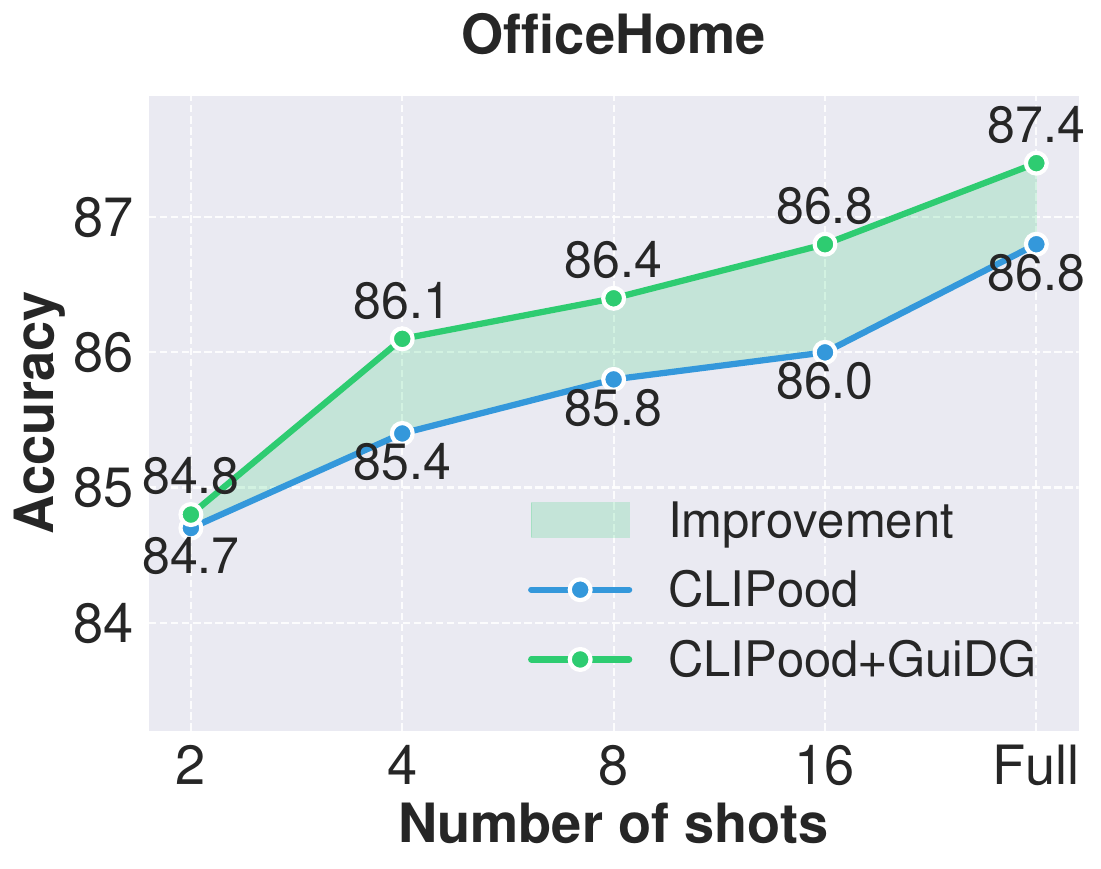}\label{fig5a}}
    \hfil
    \subfloat[Results on DomainNet]{\includegraphics[width=0.23\textwidth]{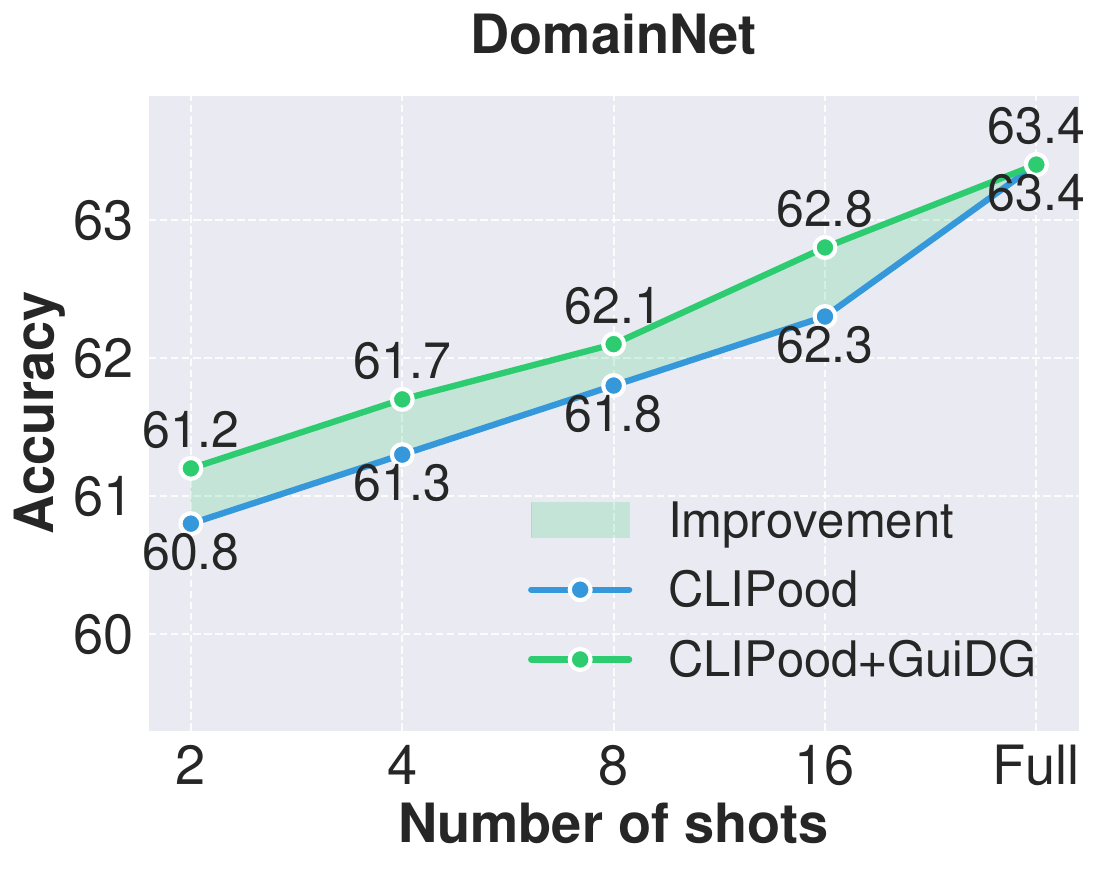}\label{fig5b}}
    %\vspace{-0.2cm}
    \caption{Few-shot results, averaged over all target domains.}
    \label{fig5}
    %\vspace{-0.3cm}
\end{figure}

\subsection{Main Results}
\label{sec:main_results}
\textbf{Standard benchmarks.} 
\cref{tab1} and \cref{tab2} include results on the five standard DG benchmarks in DomainBed~\cite{gulrajani2020search}. On OfficeHome and DomainNet we additionally provide few-shot results and comparisons. On all tasks, we adopt the prompt-tuning procedure in \cite{coop} for training domain experts with $m=16$. For regularization losses in \cref{opt2}, we experiment with the entropy loss in UEO \cite{ueo}, the Margin Metric Softmax loss in CLIPood \cite{clipood}, and ERM \cite{gulrajani2020search} (without regularization). To achieve competitive results, we incorporate WiSE-FT (WF) \cite{wiseft} when implementing ERM and UEO. The Beta Moving Average in CLIPood achieves similar effects. We provide domain-wise DG results in \cref{tab1} and average over all domains  in \cref{tab2}. We can observe that GuiDG provides steady improvements over the baseline methods on all tasks, achieving new state-of-the-art. Specifically, in few-shot settings, the significant reduce in fine-tuning data harms the generalization effects of fine-tuning methods. GuiDG mitigates such performance drop adaptive expert integration.

\textbf{Evaluation on ImageNet.} 
\cref{tab3} presents DG results on ImageNet and its variants. On ImageNet-DG, the source domains are ImageNet (train split), ImageNet-S and ImageNet-V2, and the target domains are ImageNet-A, ImageNet (evaluation split) and ImageNet-R. We ensure that in each task, the label space among source and target domains are identical. We can observe significant boosts brought by GuiDG on all baseline methods. The 8-shot performance with GuiDG even surpasses 16-shot performance without GuiDG, supporting the efficacy and data-efficiency of leveraging multiple dedicated domain models.% instead of fine-tuning one large model. 
We also compare with  prompt-tuning methods~\cite{khattak2023self,yang2023towards}. The results show the superiority of our method on both ID (domain I) and OOD generalization tasks.
To evaluate GuiDG without domain labels, we perform single-source DG on ImageNet by randomly splitting the training set of ImageNet into 4 pseudo-domains. In such case, the domain experts tend to be homogeneous, but GuiDG still performs better than competing baselines. More discussions are in Appendix.

\subsection{Analytical Experiments}
\textbf{Ablation study.} 
\cref{tab4} presents ablation study of GuiDG. We  evaluate the effectiveness of key designs in Step 1 and 2 on baseline method 16-shot ERM (WF). The results indicate each component contributes positively. The most significant performance drop emerges if only one prompt is trained instead of multiple domain experts (`multiple experts'). The results support our design to `divide and conquer' the large source domain by smaller expert models. CMAttn further guides the fine-tuning process in Step 2 by combining appropriate domain experts (learnable weights), achieving better generalization than simple averaging (no weights). By utilizing i.i.d data between Step 1 and 2, the requirements of our theory are satisfied and improvements are observed.

\textbf{Weight analysis.} Proper ensemble of domain experts in GuiDG ensures minimum  generalization risks. \cref{fig4} investigates the compatibility between domain expert accuracies and their assigned weights, revealing the following insights. (1) CMAttn is generalizable to unknown domains. The assigned weights are  reasonable and compatible with the target performance of experts without accessing target data. 
(2) The ensemble process always provides positive gains. As shown by the rightmost bars in each group, the performances by integrating multiple domain experts consistently \textit{surpass} the best individual domain expert in the group (indicated by the shadowed parts in bars). 
(3) The weights take effect by reducing negative influences of experts. While it is hard to precisely match weights with every expert on unseen domains, CMAttn correctly assigns the lowest weights to the worst-performing experts in all cases to eliminate their drawbacks.

\begin{table}[t]
    \centering
    \small
    %\resizebox{0.74\linewidth}{!}{
    \begin{tabular}{l|ccc}
    \hline
    Dataset & CMAttn & Experts & Ratio \\\midrule
    OfficeHome (4) & 1.050M & 0.025M & 0.7\% \\
    DomainNet (6) & 1.052M & 0.041M & 0.7\% \\\bottomrule
    \end{tabular}
    \caption{Parameter analysis of GuiDG. In the parentheses are the number of domains. Ratio refers to the percentage of additional parameters by integrating GuiDG.}
    %\vspace{-0.1cm}
    \label{param}
    %\vspace{-0.4cm}
\end{table}

\textbf{Feature visualization.} We conduct t-SNE  visualization \cite{van2008visualizing} on vision features before and after the domain-expert-guided fine-tuning. As shown in  \cref{fig3}, before fine-tuning the features are chaotically distributed and cannot form compact locality structures, a merit that good classification models possess \cite{li2019locality,li2022source}. 
%Especially on the target domain where no tuning data is available, most of the data points are randomly distributed. 
After our fine-tuning process, the features are more distinguishable and form distinct class-wise feature groups. We observe that on the previously unseen domains, the model can still extract discriminative features. 
%, indicating its good generalization ability. 
On the most challenging domain `qdr' (\cref{fig3d}), its features after tuning better align with other domains compared to \cref{fig3c}.

\textbf{Few-shot performances.} To evaluate GuiDG with less  fine-tuning data, we compare the performance of CLIPood and CLIPood+GuiDG under the setting of 2-, 4-, 8-, 16-shots and full data fine-tuning in \cref{fig5}. The performances drop significantly with less training data, but we can still observe steady performance gains across all few-shot settings. 
%The reason is that current fine-tuning methods rely on the large amount of data to learn generalizable features. Our GuiDG achieves generalization during the adaptive ensemble process, which provides more diversity and can better adapt to diverse unseen domain.

\textbf{Parameter analysis.} \cref{param} analyzes additional parameters introduced by incorporating GuiDG. As the number of source domains increase, the additional parameters maintain at $\sim$1M in total. Such additional 1M parameters by incorporating GuiDG account for less than 1\% of all tunable parameters in current baselines, which is reasonable.

\section{Conclusion}
This work investigates domain generalization of VLMs. Current methods  train a universal model on all source domains for generalization, which is inevitably limited by the trade-off between model specificity and generalization ability. To address this, we show that ensemble of multiple smaller source expert models brings lower target risks while maintaining source specificity. Therefore, we design a domain-expert-guided DG framework that first learns prompt experts on source domains to encompass source knowledge. Secondly, a Cross-Modal Attention module is introduced to guide the tuning of VLMs with learnable weights. Experiments on standard DG benchmarks and a newly-proposed ImageNet-DG subset demonstrate the efficacy and efficiency of GuiDG.
% Uncomment the following to link to your code, datasets, an extended version or similar.
% You must keep this block between (not within) the abstract and the main body of the paper.
% \begin{links}
%     \link{Code}{https://aaai.org/example/code}
%     \link{Datasets}{https://aaai.org/example/datasets}
%     \link{Extended version}{https://aaai.org/example/extended-version}
% \end{links}

\section{Acknowledgements}
This work was supported in part by the National Natural Science Foundation of China under Grant 62572102, 52441801, and in part by the Fundamental Research Funds for the Central Universities (UESTC) under Grant ZYGX2024Z008.
\bibliography{main}

%\section*{Appendix}

% \noindent A \quad \hyperref[sec0]{\textbf{Discussion on Remark 3.3}} 

% \noindent B \quad \hyperref[sec1]{\textbf{Proof of Theorem 3.1}} 

% \noindent C \quad \hyperref[sec2]{\textbf{Proof of Corollary 3.2}} 

% \noindent D \quad \hyperref[sec3]{\textbf{Details on ImageNet-DG}} 

% \noindent E \quad \hyperref[sec4]{\textbf{Detailed Experiment Results}} 

% \noindent F \quad \hyperref[sec6]{\textbf{Implementation Details of GuiDG}} 

% \noindent G \quad \hyperref[limit]{\textbf{Limitations and Broader Impacts}} 

\newpage
\maketitle
\section{Appendix}

\subsection{Discussion on Remark 3}
\label{sec0}
As established in Remark 3, under specific conditions, the ensemble model achieves a tighter upper bound on generalization risks compared to a universal model, i.e., $\mathrm{Upp}(\mathcal{E}^\prime(\widetilde{f}),\delta/3) < \mathrm{Upp}(\mathcal{E}^\prime(\widehat{f}),\delta)$. In GuiDG, we train expert functions $\widehat{f}_i$ with carefully constructed hypothesis spaces $\mathcal{H}_i$. Here, we demonstrate how our design fulfills the condition required in Remark 3, i.e., $\sum_{i=1}^{d} \pi_i^\prime\sqrt{2d_i}/\sqrt{\pi_i}\leq c(\delta)\sqrt{d_0}$, thereby achieving a tighter upper bound on generalization risks.

The upper bound of ensemble risks $\mathrm{Upp}(\mathcal{E}^\prime(\widetilde{f}),\delta)$ depends on VC-dimensions $\widetilde{d}$ and $d_i,i=1,\ldots,d$, while that of a universal model depends on $d_0$, the VC-dimension of the function space mapping the entire source domain to the target space. Denote the number of parameters in network space $\mathcal{H}$ as $n(\mathcal{H})$, we have $d_0=n(\mathcal{H})\log\{n(\mathcal{H})\},\widetilde{d}=n(\widetilde{\mathcal{H}})\log\{n(\widetilde{\mathcal{H}})\},d_i=n(\mathcal{H}_i)\log\{n(\mathcal{H}_i)\}$.
The original source domain inputs possess high dimensionality (e.g., image inputs). The ensemble module (e.g., CMAttn module), instead, only needs to incorporate several domain-specific outputs. Such modules possess much less tunable parameters compared to the function space that directly maps from the source domain, yielding $n(\widetilde{\mathcal{H}})\ll n(\mathcal{H})$ and $\widetilde{d}\ll d_0$.
%Given the high-dimensional nature of the source domains (e.g., image data), the feature dimensionality corresponding to $\widetilde{\zmathcal{H}}$ is $d$, as these features are constructed by aggregating outputs from $d$ smaller models. Consequently, the network defined within $\widetilde{\mathcal{H}}$ requires substantially fewer tunable parameters compared to the function space that directly maps from the source domain, 

By partitioning the whole source dataset into source sub-domains, we can  simplify the hypothesis space for each sub-task. In GuiDG, we adopt prompt-tuning to learn parameter-efficient domain experts for each sub-task. Assuming $\pi_i>\overline{\epsilon}>0$, application of the Cauchy inequality yields:
\begin{gather*}
\sum_{i=1}^d\pi_i^\prime\sqrt{2d_i}/\sqrt{\pi_i} \leq \sqrt{\sum_{i=1}^d\left(\pi_i^\prime\right)^2/\pi_i} \sqrt{\sum_{i=1}^d 2 d_i} \\
<\sqrt{\sum_{i=1}^d1/\pi_i}\sqrt{\sum_{i=1}^d2d_i}.
\end{gather*}

Let $\overline{c}_\pi=\sqrt{2\sum_{i=1}^d1/\pi_i}$. The assumption in Corollary 3.2 holds when:
\begin{gather*}
\overline{c}_\pi\sqrt{\sum_{i=1}^dd_i}\leq c(\delta)\sqrt{d_0} .
\end{gather*}

For a given source domain, $\overline{c}_\pi$ is constant and approximates $d$ when $\pi_i$ values are similar. Furthermore, since we typically consider tasks at a fixed probability level $1-\delta$, the terms $(1/d_0)\log(1/\delta)$ and $(1/d_i)\log(3/\delta)$ minimally impact $c(\delta)$ for sufficiently large $n$. We can establish that $c(\delta)>1-\widetilde{\epsilon}>0$. Thus, the condition:
\begin{gather*}
\sum_{i=1}^d d_i\leq \{c(\delta)/\overline{c}_\pi\}^2d_0,
\end{gather*}
is readily achievable by learning domain experts for each sub-task.

As a typical example, when $\pi_i=\pi_i^\prime$ and $c(\delta)\approx 1$, the condition simplifies to $2\sum_{i=1}^d d_i \leq d_0$. Let $n_i = n(\mathcal{H}_i)$, $2\sum_{i=1}^{d} n_i \le n(\mathcal{H})$, we have: 
%$ 2\sum_{i=1}^d d_i  = 2 \sum_{i=1}^d n_i \log n_i \le 2 \sum_{i=1}^d n_i \log n(\mathcal{H}) \le n(\mathcal{H}) \log n(\mathcal{H}) = d_0.$
\begin{align*}
    2\sum_{i=1}^d d_i & = 2 \sum_{i=1}^d n_i \log n_i \le 2 \sum_{i=1}^d n_i \log n(\mathcal{H}) \\
    & \le n(\mathcal{H}) \log n(\mathcal{H}) = d_0.
\end{align*}

We further provide an illustrative toy example to help understand our proposed bounds. We generate synthetic data nonlinearly from $f(x) = \text{sgn}(x)(3|\cos(x)| + x^2/2 + 3)$ with Gaussian noise. 
Fully-connected layers with structure $1 \to h \to h \to 1$ are then trained to fit the data. 
Baseline models include $h = h_1$ hidden unit. Our method splits and fits the data with 2 separate experts, each containing $h = 40$ hidden units. Their outputs are aggregated via a network with 3 hidden units. 
%, and aggregates their outputs via a network with $3$ hidden units. 
We test $h_1=$ 60,80,100 with 40 repeats, each with 200 training and 5000 test samples. Table \ref{toy} shows baseline risk $R_B$, our method's risk $R_O$, excess risks $E_B,E_O$, their ratio $R = E_B / E_O$, and theoretical ratio bound $r$. Both $R$ and $r$ grow as $h_1$ increases, with $r$ growing faster, validating the bound. $R_O$ is consistently lower than $R_B$ while requiring less parameters, indicating the efficacy of GuiDG.

\begin{table}[!h]
    \centering
    \caption{Experimental results on the toy example.}
    \label{toy}
    \resizebox{0.9\columnwidth}{!}{
    \setlength{\tabcolsep}{8pt}
    \begin{tabular}{c|cccccc}
    \toprule
    $h_1$ & $R_B$ & $R_O$ & $E_B$ & $E_O$ & $R$ & $r$ \\\midrule
    $60$ & $0.689$ & $0.599$ & $0.246$ & $0.220$ & $1.118$ & $1.120$ \\
    $80$ & $0.670$ & $0.599$ & $0.263$ & $0.220$ & $1.195$ & $1.482$ \\
    $100$ & $0.659$ & $0.599$ & $0.301$ & $0.220$ & $1.368$ & $1.843$\\\bottomrule
    %300 & 0.682 & 0.607 & 0.279 & 0.210 & 1.331 & 5.458 \\ \bottomrule
    \end{tabular}}  
    \vspace{-8pt}
    %\vspace{-11pt}
\end{table}

\subsection{Proof of Theorem 1}
\label{sec1}
Before proving Theorem 1, we introduce the following lemma \cite{vapnik2013nature}.
%We can find lemma \ref{lemma: generalb} in any machine learning textbook 
\begin{lemma}\label{lemma: generalb}
    Assume hypothesis space $\mathcal{H}$ and $\mathcal{H}_i$ have VC-dimension $d_0$ and $d_i$ respectively. There exists constant $C>0$, such that for any $\delta\in(0,1)$ with probability at least $1-\delta$ following inequality hold:
    \begin{gather*}
        \underset{h\in\mathcal{H}}{\sup}\mathcal{E}(h)-\overset{d}{\underset{i=1}{\sum}}\pi_i\widehat{\mathcal{E}}_i(f)\leq C\sqrt{\dfrac{d_0\log(n)+\log(1/\delta)}{n}}\,,\\
        \underset{h\in\mathcal{H}_i}{\sup}\mathcal{E}_i(h)-\widehat{\mathcal{E}}_i(h)\leq C\sqrt{\dfrac{d_i\log(n_i)+\log(1/\delta)}{n_i}}\,.
    \end{gather*}
\end{lemma}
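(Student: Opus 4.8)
The plan is to recognize both displayed inequalities as instances of a single classical VC-type uniform-convergence bound, prove that generic bound once, and then instantiate it twice. Interpreting the left-hand sides as the one-sided uniform deviations $\sup_{h\in\mathcal{H}}\bigl(\mathcal{E}(h)-\sum_{i}\pi_i\widehat{\mathcal{E}}_i(h)\bigr)$ and $\sup_{h\in\mathcal{H}_i}\bigl(\mathcal{E}_i(h)-\widehat{\mathcal{E}}_i(h)\bigr)$, the first observation is a bookkeeping simplification: since $n_i^S=\pi_i n$, the aggregated empirical risk collapses to an ordinary sample average,
\begin{align*}
\sum_{i=1}^d \pi_i\widehat{\mathcal{E}}_i(h)=\sum_{i=1}^d \pi_i\frac{1}{n_i^S}\sum_{j=1}^{n_i^S}\mathcal{L}(y_j^i,h(x_j^i))=\frac{1}{n}\sum_{k=1}^{n}\mathcal{L}(y_k,h(x_k)),
\end{align*}
i.e. the empirical risk over $n$ i.i.d. draws from the mixture $P$. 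Hence the first inequality is exactly the generic bound applied to the triple $(\mathcal{H},P,n)$ with VC-dimension $d_0$, and the second is the same bound applied to $(\mathcal{H}_i,P_i,n_i^S)$ with VC-dimension $d_i$; I would therefore prove the generic statement once and invoke it in both places.

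For the generic bound I work with the loss-composed class $\mathcal{G}=\{(x,y)\mapsto\mathcal{L}(y,h(x)):h\in\mathcal{H}\}$, whose members take values in $[0,c_L]$, writing $z_k=(x_k,y_k)$, and I control the one-sided empirical process $\Phi=\sup_{g\in\mathcal{G}}\bigl(\mathbb{E}[g]-\frac{1}{n}\sum_{k}g(z_k)\bigr)$. First I bound $\mathbb{E}[\Phi]$ by the standard symmetrization argument: introducing a ghost sample and Rademacher signs $\sigma_k$ gives $\mathbb{E}[\Phi]\le 2\,\mathbb{E}\bigl[\sup_{g}\frac{1}{n}\sum_k\sigma_k g(z_k)\bigr]$, twice the empirical Rademacher complexity of $\mathcal{G}$. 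On any fixed sample the set of realizable loss-vectors is finite, with cardinality controlled by the growth function of $\mathcal{H}$, which by the Sauer--Shelah lemma is at most $(en/d_0)^{d_0}$; Massart's finite-class lemma then bounds the Rademacher complexity by a quantity of order $c_L\sqrt{d_0\log(n)/n}$.

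To upgrade the expectation bound to a high-probability statement I would apply McDiarmid's inequality: changing a single sample point $z_k$ alters $\Phi$ by at most $c_L/n$, so with probability at least $1-\delta$ one has $\Phi\le\mathbb{E}[\Phi]+c_L\sqrt{\log(1/\delta)/(2n)}$. Combining this with the Rademacher/Sauer bound on $\mathbb{E}[\Phi]$ and absorbing $c_L$ together with the numerical constants into a single $C>0$ yields $\Phi\le C\sqrt{(d_0\log(n)+\log(1/\delta))/n}$, which is precisely the first inequality; the identical chain of steps with $(\mathcal{H}_i,P_i,n_i^S,d_i)$ gives the second.

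The step I expect to be the main obstacle is the transition from the VC-dimension of $\mathcal{H}$, defined for the hypotheses themselves, to a polynomial growth function for the loss-composed class $\mathcal{G}$. For the $0/1$ classification loss this is immediate, since $\mathcal{G}$ and $\mathcal{H}$ induce the same dichotomies and $c_L=1$; for a general bounded loss one must instead invoke the pseudo-dimension (VC-subgraph dimension) and the fact that composing with the fixed map $\mathcal{L}$ does not inflate this complexity beyond a constant factor, so that the growth and covering numbers of $\mathcal{G}$ remain polynomial of degree $O(d_0)$. I would resolve this by working in the classification setting where $d_0$ directly governs the growth function and pushing all loss-dependent factors into $C$, exactly as in the classical Vapnik bound cited as the source of this lemma.
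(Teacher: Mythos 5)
Your proof is correct, but it takes a genuinely different route from the paper for a simple reason: the paper does not prove this lemma at all --- it imports it as a known result, citing Vapnik \cite{vapnik2013nature}, and uses it as a black box in the proof of Theorem 1. You instead supply a self-contained derivation via the standard modern chain (symmetrization to empirical Rademacher complexity, Sauer--Shelah to bound the growth function by $(en/d_0)^{d_0}$, Massart's finite-class lemma, and McDiarmid to lift the expectation bound to high probability), and you correctly read the garbled left-hand side of the first display (the $h$/$f$ mismatch) as the one-sided uniform deviation $\sup_{h\in\mathcal{H}}\bigl(\mathcal{E}(h)-\sum_i\pi_i\widehat{\mathcal{E}}_i(h)\bigr)$. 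Your approach buys two things the citation does not: it makes explicit that under $n_i^S=\pi_i n$ the weighted empirical risk collapses to a plain average over the pooled sample, which is exactly why a single generic bound covers both displayed inequalities, and it surfaces the loss-composition issue (VC dimension of $\mathcal{H}$ versus complexity of the loss class $\mathcal{G}$) that the paper's statement, phrased for a general loss bounded by $c_L$, silently glosses over; your resolution via pseudo-dimension or restriction to classification is the right fix. One small inaccuracy worth flagging: the pooled sample is not i.i.d.\ from the mixture $P$ as you assert --- the per-domain sample sizes are deterministic ($n_i^S=\pi_i n$), so it is a stratified sample of independent but non-identically distributed points. This does not damage your argument, since symmetrization and McDiarmid require only independence and Sauer/Massart are conditional on the realized sample, with $\mathbb{E}\bigl[\tfrac1n\sum_k g(z_k)\bigr]=\sum_i\pi_i\mathbb{E}_{P_i}[g]=\mathcal{E}(h)$ still holding exactly, but the claim should be stated that way rather than as i.i.d.\ sampling from $P$.
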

%Next we give a proof of theorem \ref{theo: general}.
%\begin{proof}[Proof of Theorem \ref{theo: general}]
\begin{proof}[Proof of Theorem 1]
    Define the risks on each source domain data in Step 2 as $\widetilde{\mathcal{E}}_i(h)=\sum_{j=1}^{\widetilde{n}_i^S} \frac{1}{\widetilde{n}_i^S} \mathcal{L}(\widetilde{y}_j^i,h(\widetilde{x}_j^i))$, and recall the definition that $\widehat{\mathcal{E}}_i(h)=\sum_{j=1}^{n_i^S} \frac{1}{n_i^S}\mathcal{L}(y_j^i,h(x_j^i))$.
    As $\widetilde{f}=\mathcal{A}( \widehat{f}_1,\ldots,\widehat{f}_d;\widetilde{D}^S )$ is trained on additional dataset $\widetilde{D}^S$, following lemma \ref{lemma: generalb}, with probability at least $1-\delta$ where $\delta\in(0,1)$, we have
    \begin{equation*}
        \mathcal{E}_i(\widetilde{f})\leq\widetilde{\mathcal{E}}_i(\widetilde{f})+C\sqrt{\dfrac{\widetilde{d}\log(m)+\log(1/\delta)}{m}}\,.
    \end{equation*}
    Algorithm $\mathcal{A}$ aggregates $\widetilde{f}$  using $\widetilde{D}^S$ and ensures that for each source data point $(\widetilde{x}_j^i,\widetilde{y}_j^i)$, $\widetilde{f}$ behaves no worse than any $\widehat{f}_i,i=1,\ldots,d$. Therefore, we can derive 
    \begin{equation*}
        \widetilde{\mathcal{E}}_i(\widetilde{f})\leq\widetilde{\mathcal{E}}_i(\widehat{f}_i)\,.
    \end{equation*}
    The risk of $\widetilde{f}$ on $P^\prime$ can be divided. That is, with probability at least $1-d\delta$, we have
    \begin{align}
        \mathcal{E}^\prime(\widetilde{f})=&\overset{d}{\underset{i=1}{\sum}}\pi_i^\prime\mathcal{E}_i(\widetilde{f})\notag\\
        \leq&\overset{d}{\underset{i=1}{\sum}}\pi_i^\prime\widetilde{\mathcal{E}}_i(\widetilde{f})+C\sqrt{\dfrac{\widetilde{d}\log(m)+\log(1/\delta)}{m}}\notag\\
        \leq&\overset{d}{\underset{i=1}{\sum}}\pi_i^\prime\widetilde{\mathcal{E}}_i(\widehat{f}_i)+C\sqrt{\dfrac{\widetilde{d}\log(m)+\log(1/\delta)}{m}}\,.\label{ineq: divided1}
    \end{align}
    Note that $\widehat{f}_i$ is independent of $\widetilde{D}_i^S$. Using hoeffding inequality we can derive
    \begin{equation*}
        \mathbb{P}\left( \widetilde{\mathcal{E}}_i(\widehat{f}_i)-\mathcal{E}_i(\widehat{f}_i)\leq t \right)\geq 1-\exp\left( -2\widetilde{n}_i^St^2/c_L \right)\,.
    \end{equation*}
    Let $t=\sqrt{\dfrac{c_L\log(1/\delta)}{2\widetilde{n}_i^S}}$, we have with probability at least $1-d\delta$, $\forall i=1,\ldots,d$
    \begin{equation}
        \widetilde{\mathcal{E}}_i(\widehat{f}_i)\leq\mathcal{E}_i(\widehat{f}_i)+\sqrt{\dfrac{c_L\log(1/\delta)}{2\widetilde{n}_i^S}}\,.\label{ineq: divided2}
    \end{equation}
    Use lemma \ref{lemma: generalb} again, with probability at least $1-d\delta$, we have for $\forall i=1,\ldots,d$
    \begin{align}
        \mathcal{E}_i(\widehat{f}_i)\leq\widehat{\mathcal{E}}_i(\widehat{f}_i)+C\sqrt{\dfrac{d_i\log(n_i^S)+\log(1/\delta)}{n_i^S}}\,.\label{ineq: divided3}
    \end{align}
    Combine inequality (\ref{ineq: divided1}), (\ref{ineq: divided2}) and (\ref{ineq: divided3}) together we have
    \begin{align}
        \mathcal{E}^\prime(\widetilde{f})\leq&\overset{d}{\underset{i=1}{\sum}}\pi_i^\prime\widetilde{\mathcal{E}}_i(\widehat{f}_i)+C\sqrt{\dfrac{\widetilde{d}\log(m)+\log(1/\delta)}{m}}\notag\\
        \leq&\overset{d}{\underset{i=1}{\sum}}\pi_i^\prime\mathcal{E}_i(\widehat{f}_i)+\left( \overset{d}{\underset{i=1}{\sum}}\pi_i^\prime/\sqrt{\pi_i} \right)\sqrt{\dfrac{c_L\log(1/\delta)}{2m}}\notag\\
        &+C\sqrt{\dfrac{\widetilde{d}\log(m)+\log(1/\delta)}{m}}\notag\\
        \leq&\overset{d}{\underset{i=1}{\sum}}\pi_i^\prime\widehat{\mathcal{E}}_i(\widehat{f}_i)+\left( \overset{d}{\underset{i=1}{\sum}}\pi_i^\prime/\sqrt{\pi_i} \right)\sqrt{\dfrac{c_L\log(1/\delta)}{2m}}\notag\\
        &+C\sqrt{\dfrac{\widetilde{d}\log(m)+\log(1/\delta)}{m}}\notag\\
        &+C \overset{d}{\underset{i=1}{\sum}}\pi_i^\prime\sqrt{\dfrac{d_i\log(n_i^S)+\log(1/\delta)}{n_i^S}}\,.\label{form: boundforAgg1}
    \end{align}
    For target risk on $\widehat{f}$, utilize lemma \ref{lemma: generalb} again, with probability at least $1-d\delta$ we have
    \begin{align}
        \mathcal{E}^\prime(\widehat{f})-\overset{d}{\underset{i=1}\sum}\pi_i^\prime\widehat{\mathcal{E}}_i(\widehat{f})\leq C\sqrt{\dfrac{d_0\log(N)+\log(1/\delta)}{N}}\,.
        \label{form: boundforClassical1}
    \end{align}
\end{proof}

\subsection{Proof of Corollary 2}
\label{sec2}
In this section we prove Corollary 2.
%\begin{proof}[Proof of Corollary \ref{corr: general}]
\begin{proof}[Proof of Corollary 2]
    As $n_i^S=\pi_in$, we can reformulate $\mathrm{Upp}(\mathcal{E}^\prime(\widetilde{f}),\delta/3)$ as 
    \begin{align*}
        &\mathrm{Upp}(\mathcal{E}^\prime(\widetilde{f}),\delta/3)\\
        =&C \overset{d}{\underset{i=1}{\sum}}\pi_i^\prime\sqrt{\dfrac{d_i\log(n_i^S)+\log(3/\delta)}{n_i^S}}+\varepsilon\\
        =&C \overset{d}{\underset{i=1}{\sum}}\dfrac{\pi_i^\prime\sqrt{2d_i}}{\sqrt{\pi_i}}\sqrt{\dfrac{\log(n_i^S)+(1/d_i)\log(3/\delta)}{N}}+\varepsilon\\
        \leq&C \overset{d}{\underset{i=1}{\sum}}\dfrac{\pi_i^\prime\sqrt{2d_i}}{c(\delta)\sqrt{\pi_i}}\sqrt{\dfrac{\log(N)+(1/d_0)\log(1/\delta)}{N}}+\varepsilon\\
        \leq&C\sqrt{\dfrac{d_0\log(N)+\log(1/\delta)}{N}}+\varepsilon\\
        =&\mathrm{Upp}(\mathcal{E}^\prime(\widehat{f}),\delta)+\varepsilon\,,
    \end{align*}
    where the first inequality is due to the definition of $c(\delta)$.
\end{proof}

\begin{table*}[t]
    \centering
    \caption{Statistics of ImageNet-DG.}
    \resizebox{0.9\linewidth}{!}{
    \begin{tabular}{ccccccc}
    \toprule
    \multicolumn{1}{c}{\multirow{2}{*}{target domain}} & \multicolumn{1}{c}{\multirow{2}{*}{class count}} & \multicolumn{1}{c}{\multirow{2}{*}{ target samples}} & \multicolumn{3}{c}{ sample number of source domains} & \multicolumn{1}{c}{\multirow{2}{*}{total}} \\ \cmidrule(lr){4-6}
    \multicolumn{1}{c}{} & \multicolumn{1}{c}{} & \multicolumn{1}{c}{} & ImageNet (train) & ImageNet-S & ImageNet-V2 & \multicolumn{1}{c}{} \\ \cmidrule(r){1-7} 
    ImageNet-A & 200 & 7500 & 259906 & 10169 & 2000 & 272075 \\
    ImageNet (eval) & 1000 & 50000 & 1281167 & 50889 & 10000 & 1342056 \\
    ImageNet-R & 200 & 30000 & 258951 & 10152 & 2000 & 271103 \\ \bottomrule
    \end{tabular}}
\label{app_tab1}
\end{table*}

\begin{figure}[t]
    \centering
    \includegraphics[width=\columnwidth]{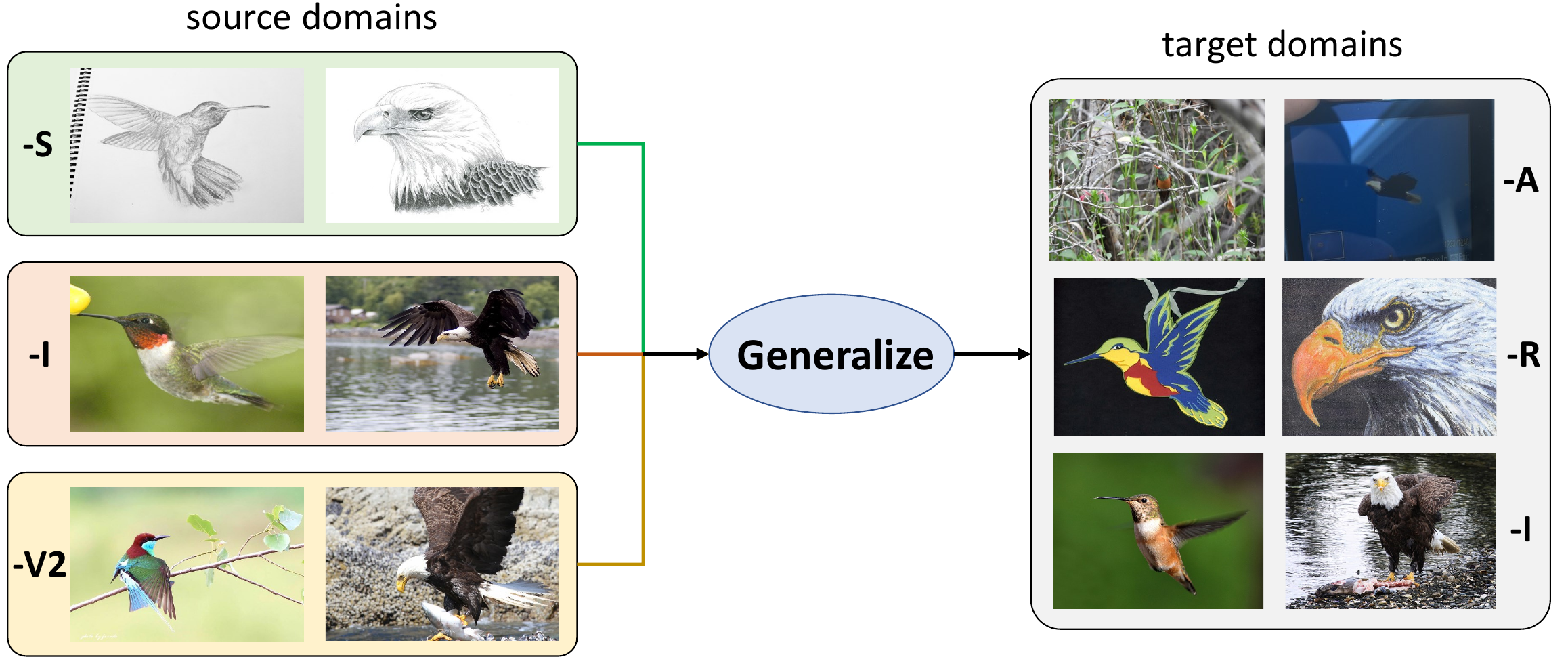}
    \caption{Example pictures of ImageNet-DG. The presented images are from classes `hummingbird' and `bald eagle'.}
    \label{app_fig1}
  \end{figure}

\subsection{Details on ImageNet-DG}
\label{sec3}
We construct ImageNet-DG from ImageNet \cite{deng2009imagenet} and its four variants (ImageNet-A \cite{hendrycks2021natural}, ImageNet-R \cite{hendrycks2021many}, ImageNet-S \cite{wang2019learning} and ImageNet-V2 \cite{recht2019imagenet}). ImageNet-DG includes 3 target domains to generalize to, i.e.,  ImageNet-A, ImageNet (evaluation split) and ImageNet-R. For all 3 tasks, the source domains are ImageNet (train split), ImageNet-S and ImageNet-V2. However, ImageNet-A and ImageNet-R only includes 200 classes sampled from the 1000 classes in ImageNet. In the problem setting of domain generalization, the label distribution $P_{Y|X}$ of source and target domains should be the same. Therefore, we only select source samples that \textit{share categories with the target domain} for each task. The statistics of the resultant ImageNet-DG dataset are in \cref{app_tab1}.
\cref{app_fig1} presents example pictures in ImageNet-DG. The model needs to incorporate knowledge from various source domains (e.g., sketch and natural style) and generalize to unknown domains. The target domains include natural adversarial (-A) samples that are hard to recognize even for humans, art-style (-R) pictures and natural pictures (-I). The large amount of samples provides robust and comprehensive assessment of model generalization ability.

\subsection{Detailed Experiment Results}
\label{sec4}
\begin{table*}[t]
    \caption{Detailed results on TerraIncognita, VLCS and PACS. Best results are in bold.}
    \centering
    \resizebox{\linewidth}{!}{
    \begin{tabular}{l|ccccc|ccccc|ccccc}
    \toprule
     & \multicolumn{5}{|c}{TerraIncognita} & \multicolumn{5}{|c}{VLCS} & \multicolumn{5}{|c}{PACS} \\ \cmidrule(l){2-16} 
    \multirow{-2}{*}{Methods} & L100 & L38 & L43 & L46 & Avg. & C & L & S & V & Avg. & A & C & P & S & Avg. \\ \cmidrule(r){1-16}
    CLIP \cite{radford2021learning} & 50.8 & 23.4 & 32.2 & 28.8 & 33.8 & \textbf{100.0} & 67.4 & 73.5 & 86.1 & 81.8 & 97.6 & 98.9 & \textbf{100.0} & 88.2 & 96.2 \\
    ERM (WF) & 57.1 & 54.8 & 48.5 & 43.6 & 51.0 & \textbf{100.0} & 66.1 & 76.8 & 90.1 & 83.3 & 97.8 & 99.1 & \textbf{100.0} & 89.9 & 96.7 \\
    \rowcolor[HTML]{DDEBF7} 
    ERM (WF)+ GuiDG & 59.2 & 56.0 & 50.1 & 46.2 & 52.9 & \textbf{100.0} & 68.0 & 77.4 & 89.9 & 83.8 & 98.5 & 99.4 & \textbf{100.0} & \textbf{92.5} & \textbf{97.6} \\
    UEO (WF) \cite{ueo} & 58.7 & 57.5 & 47.6 & 42.2 & 51.5 & \textbf{100.0} & 60.1 & 76.2 & 88.9 & 81.3 & 98.1 & 98.9 & \textbf{100.0} & 90.7 & 96.9 \\
    \rowcolor[HTML]{DDEBF7} 
    UEO (WF)+ GuiDG & 57.7 & \textbf{60.0} & 48.2 & 43.4 & 52.3 & \textbf{100.0} & 66.9 & \textbf{79.6} & 90.1 & 84.2 & 98.9 & \textbf{99.6} & \textbf{100.0} & 91.8 & \textbf{97.6} \\
    CLIPood \cite{clipood} & 73.1 & 58.4 & \textbf{57.7} & 50.9 & 60.0 & 98.9 & 68.2 & 78.8 & \textbf{90.8} & 84.2 & \textbf{99.0} & \textbf{99.6} & \textbf{100.0} & 90.7 & 97.3 \\
    \rowcolor[HTML]{DDEBF7} 
    CLIPood+ GuiDG & \textbf{74.7} & 59.8 & 56.8 & \textbf{52.3} & \textbf{60.9} & 99.3 & \textbf{69.7} & 77.9 & 90.1 & \textbf{84.3} & 98.3 & \textbf{99.6} & \textbf{100.0} & 91.1 & 97.3 \\ \bottomrule
    \end{tabular}}
\label{app_tab2}
\end{table*}

We extend Table 2 in main paper by presenting domain-wise results on TerraIncognita, VLCS and PACS. Results are in \cref{app_tab2}. Each column  represents one generalization task. The column names are the target domains. Specifically,  in TerraIncognita the columns names are locations of camera, in VLCS (V-VOC2007, L-LabelMe, C-Caltech101, S-SUN09) are names of sub-datasets, and in PACS (P-photo, A-art painting, C-cartoon, S-sketch) are art styles. We can observe that incorporating GuiDG always brings positive overall gains, and that on each dataset our GuiDG achieves the best results. On all tasks, the results are significantly higher than zero-shot CLIP. On TerraIncognita, GuiDG enhances CLIP with abundant domain-specific knowledge, while on VLCS and PACS, GuiDG preserves the pretrained knowledge in CLIP. Therefore, GuiDG achieves consistent superiority on various generalization scenarios.

\subsection{Implementation Details of GuiDG}
\label{sec6}

\begin{algorithm*}
    \caption{Two-step training algorithm for GuiDG.}
    \label{alg:main}
    \begin{algorithmic}[1]
    % First sub-algorithm
    \Procedure{Training domain experts.}{}
        \State \textbf{Input:} source dataset $D^S$, number of source domains $d$, CLIP encoders $E_v$ and $E_t$.
    \For{$i$ in $[1,2,...,d]$}
        \State Obtain domain-specific data $D_i^S$ from $D^S$ (which is readily available in multi-source DG tasks, and is randomly divided in single-source DG tasks).
        \While{not converged}
            \State Sample data points $(x_j^i, y_j^i)$ from $D_i^S$.
            \State Compute training loss $\mathcal{L}_p^i$ in \cref{sup_eq1}.
            \State Update learnable prompt embeddings $\mathbf{p}_i$ by minimizing $\mathcal{L}_p^i$.
        \EndWhile
    \EndFor
    \State \Return Trained domain experts $\{\mathbf{p}_i\}_{i=1}^d$.
    \EndProcedure
    
    % Second sub-algorithm
    \Procedure{Fine-tuning CLIP with dedicated prompt guidance.}{}
    \State \textbf{Input:} Trained domain experts $\{\mathbf{p}_i\}_{i=1}^d$, additional dataset $\widetilde{D}^S$, off-the-shelf fine-tuning regularization term $\mathcal{L}_r$, regularization weight $\alpha$, CLIP encoders $E_v$ and $E_t$.
    \While{not converged}
        \State Sample data points $(x_j, y_j)$ from $\widetilde{D}^S$.
        \State Compute domain weights as in \cref{sup_eq2}.
        \State Compute training loss $\mathcal{L}_f$ in \cref{sup_eq3}.
        \State Update parameters in $E_v$ and CMAttn by minimizing $\mathcal{L}_f$.
    \EndWhile
    \State \Return Fine-tuned CLIP vision encoder parameters $\theta_{E_v}$, trained CMAttn with parameters  $\{\theta_{L_q},\theta_{L_k}\}$.
    \EndProcedure
    \end{algorithmic}
\end{algorithm*}

Detailed training algorithm for GuiDG are in \cref{alg:main}. Recall the training losses for domain experts as \cref{sup_eq1}:
\begin{align}
    \mathcal{L}^i_p= - \sum_{j=1}^{n_i^S} \sum_{c=1}^{C} \bm{1}[y_j^i=c] \cdot \log P_c(\hat{y}\mid x^i_j,t^i_{1:C}),
\label{sup_eq1}
\end{align}
where $P_c(y|x)$ computes the probability that output $y$ belongs to class $c$.

During the fine-tuning of CLIP, we first compute domain importance by CMAttn:
\begin{align}
    \mathbf{w}(x) = \mathrm{Softmax}(\mathrm{cos} \left\langle q(x), [k_1,k_2,\cdots,k_d] \right\rangle ),
\label{sup_eq2}
\end{align}
where $q(\cdot)$ is the query transformation in CMAttn, and $k_i$ are keys transformed from domain experts. We then compute overall loss by weighting the loss for each domain:
\begin{align}
    \mathcal{L}_{f} = \sum_{i=1}^{d} \sum_{j=1}^{\widetilde{n}^S_i} w_i(x_j^i) \cdot \left( - \sum_{c=1}^{C} \bm{1}[y_j^i=c] \cdot \log P_c(\hat{y}|x_j^i,t^i_{1:C}) \right).
    \label{sup_eq3}
\end{align} 
We implement \cref{alg:main} and conduct all experiments with PyTorch on one NVIDIA RTX 4090 GPU.

Below we  introduce the off-the-shelf regularization techniques $\mathcal{L}_r$ mentioned in Algorithm \ref{alg:main}, line 14. In this paper we mainly build our method upon three DG baselines: UEO \cite{ueo}, CLIPood \cite{clipood} and WiSE-FT (WS) \cite{wiseft}. 

\textbf{UEO} introduces universal entropy minimization as regularization: $\mathcal{L}=\sum_{x} \tilde{w}(x) \mathcal{H}(p(x)) - \mathcal{H}(\bar{p})$, where $\mathcal{H}(p(x))=-\sum_{c=1}^{C}p_c(x) \log p_c(x)$ denotes the Shannon entropy of $p(x)$, $\tilde{w}(x)=\frac{1}{B}$ where $B$ is batch size, $p_c(x)$ is classification probability for class $c$. The regularization weight is set to $\alpha=0.1$ as in \cite{ueo}.

\textbf{CLIPood} introduces beta moving average (BMA) for weight averaging, and margin metric softmax (MMS) loss. BMA  maintains a moving average
model $\theta^{\text{BMA}}$ and at each time step $t$, and the current model $\theta^t$ is added into $\theta^{\text{BMA}}_t$ to update the moving average:
\begin{align}
    \theta^{\text{BMA}}_t = \frac{\sum_{k=0}^{t-1} \alpha_k}{\sum_{k=0}^{t} \alpha_k} \cdot \theta^{\text{BMA}}_{t-1} + \frac{\alpha_t}{\sum_{k=0}^{t} \alpha_k}  \cdot \theta_t,
\end{align}
where
\begin{align}
    \alpha_t = \mathrm{Beta}(\beta, \beta) \left( \frac{t+0.5}{T+1} \right),
\end{align}
where $\mathrm{Beta}(\beta, \beta)$ is beta distribution and $\beta=0.5$ is hyperparameter. The MMS loss is given by:
\begin{align}
    \mathcal{L} = -\log \frac{\exp (S(I_x, T_y) / \tau)}{\sum_{c=1}^{C} \exp (S(I_x, T_c) + \lambda \cdot D(T_y, T_c)) / \tau},
\end{align}
where $D(T_y, T_c) = 1-S(T_y, T_c)$, $S(\cdot,\cdot)$ is similarity score between feature representations.

\textbf{WiSE-FT} is another weight averaging technique. Consider pretrained model weight $\beta_0$ and weights after fine-tuning $\beta_1$, the weight-space ensemble is given by:
\begin{align}
    \beta_{ens} = (1-\alpha) \cdot \beta_0 + \alpha \cdot \beta_1,
\end{align}
where $\alpha=0.5$ is hyperparameter.

For more detailed description please refer to the original papers \cite{ueo,clipood,wiseft}. Our method is compatible with more generalization techniques, which could potentially further improve the performance.

\end{document}